\newtheorem{theorem}{Theorem}[section]
\newtheorem{lemma}[theorem]{Lemma}
\theoremstyle{definition}
\theoremstyle{remark}
\definecolor{Blue9}{rgb}{0.098,0.3,0.9}
\definecolor{DarkBlue}{rgb}{0,0.08,0.45}
\title{DPOK: Reinforcement Learning for \\ Fine-tuning Text-to-Image Diffusion Models}
\author{%
  Ying Fan$^{*,1,2}$, Olivia Watkins$^{3}$, Yuqing Du$^{3}$, Hao Liu$^{3}$, Moonkyung Ryu$^{1}$,  Craig Boutilier$^{1}$, \\ \textbf{Pieter Abbeel}$^{3}$, $\;\;$\textbf{Mohammad Ghavamzadeh}$^{\dagger,4}$, $\;\;$\textbf{Kangwook Lee}$^{2}$, $\;\;$\textbf{Kimin Lee}$^{*,\dagger,5}$ \\
  $^{*}$Equal technical contribution  $\qquad\quad^\dagger$Work was done at Google Research\\
  $^{1}$Google Research  \hspace{4pt}
  $^{2}$University of Wisconsin-Madison \hspace{4pt} $^{3} \vspace{.1em}  $UC Berkeley \hspace{4pt} $^{4}\vspace{.1em}$Amazon \hspace{4pt} $^{5} \vspace{.1em}  $KAIST  \\
}
\begin{document}

\maketitle

\begin{abstract}
Learning from human feedback has been shown to improve text-to-image models. These techniques first learn a reward function that captures what humans care about in the task and then improve the models based on the learned reward function. Even though relatively simple approaches (e.g., rejection sampling based on reward scores) have been investigated, fine-tuning text-to-image models with the reward function remains challenging. In this work, we propose using online reinforcement learning (RL) to fine-tune text-to-image models. We focus on \emph{diffusion models}, defining the fine-tuning task as an RL problem, and updating the pre-trained text-to-image diffusion models using policy gradient to maximize the feedback-trained reward. Our approach, coined DPOK, integrates policy optimization with KL regularization. We conduct an analysis of KL regularization for both RL fine-tuning and supervised fine-tuning. In our experiments, we show that DPOK is generally superior to supervised fine-tuning with respect to both image-text alignment and image quality. Our code is available at \href{https://github.com/google-research/google-research/tree/master/dpok}{https://github.com/google-research/google-research/tree/master/dpok}.
\end{abstract}

\section{Introduction} \label{sec:intro}

Recent advances in \emph{diffusion models}~\cite{ho2020denoising, sohl2015deep,ddim}, together with pre-trained text encoders (e.g., CLIP~\cite{clip}, T5~\cite{t5}) have led to impressive results in text-to-image generation. Large-scale text-to-image models, such as Imagen~\cite{imagen}, Dalle-2~\cite{dalle2}, and Stable Diffusion~\cite{stablediffusion}, generate high-quality, creative images given novel text prompts. However, despite these advances, current models have systematic weaknesses. For example, current models have a limited ability to compose multiple objects~\cite{feng2022training,gokhale2022benchmarking,petsiuk2022human}. They also frequently encounter difficulties when generating objects with specified colors and counts~\cite{tifa,lee2023aligning}.

\emph{Learning from human feedback (LHF)} has proven to be an effective means to overcome these limitations~\cite{pickscore,lee2023aligning,hps,imagereward}.
\citet{lee2023aligning} demonstrate that certain properties, such as generating objects with specific colors, counts, and backgrounds, can be improved by learning a \emph{reward function} from human feedback, followed by fine-tuning the text-to-image model using supervised learning. 
They show that simple supervised fine-tuning based on reward-weighted loss can improve the reward scores, leading to better image-text alignment.
However, 
supervised fine-tuning often induces a deterioration in image quality (e.g., over-saturated or non-photorealistic images). 
This is likely due to the model being fine-tuned on a fixed dataset that is generated by a pre-trained model (Figure~\ref{fig:framework_sup}).

\begin{figure*} [t!] \centering
\subfigure[Supervised fine-tuning]
{
\includegraphics[width=0.45\textwidth]{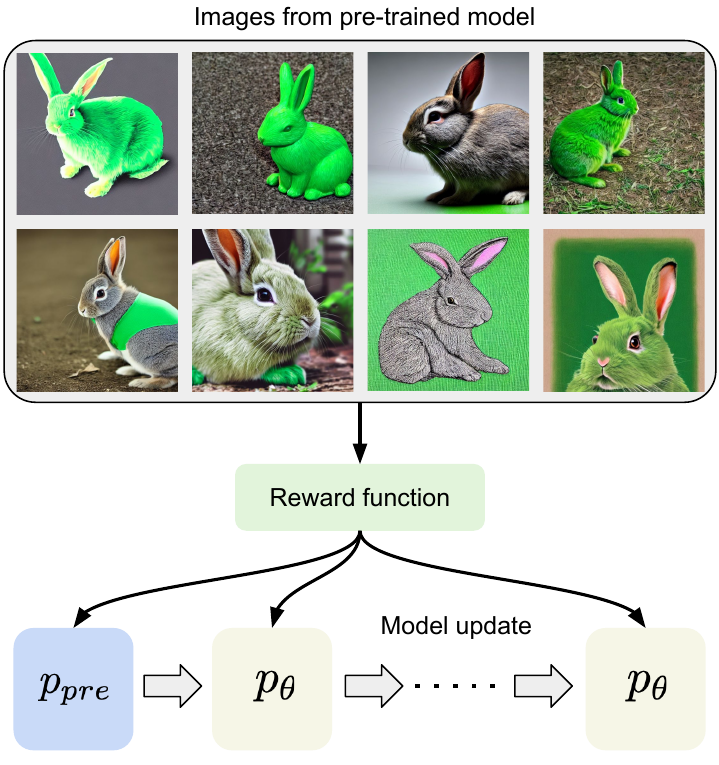}
\label{fig:framework_sup}} 
\subfigure[RL fine-tuning]
{\includegraphics[width=0.45\textwidth]{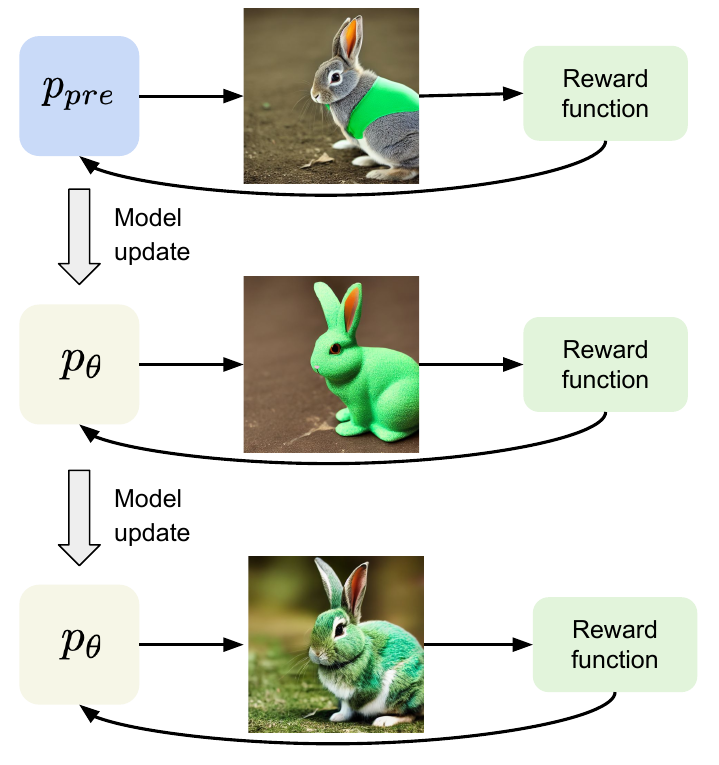}
\label{fig:framework_rl}}
\vspace{-0.05in}
\caption{Illustration of (a) reward-weighted supervised fine-tuning and (b) RL fine-tuning. Both start with the same pre-trained model (the blue rectangle). In supervised fine-tuning, the model is updated on a fixed dataset generated by the pre-trained model. In contrast, the model is updated using new samples from the previously trained model during online RL fine-tuning.}
\vspace{-0.2in}
\label{fig:framework}
\end{figure*}


In this work, we explore using \emph{online reinforcement learning (RL)} for fine-tuning text-to-image diffusion models (Figure~\ref{fig:framework_rl}).
We show that optimizing the expected reward of a diffusion model's image output is equivalent to performing policy gradient on a multi-step diffusion model under certain regularity assumptions. We also incorporate Kullback–Leibler (KL) divergence with respect to the pre-trained model as regularization in an online manner, treating this as an implicit reward.

In our experiments, we fine-tune the Stable Diffusion model~\cite{stablediffusion} using ImageReward~\cite{imagereward}, an open-source reward model trained on a large dataset comprised of human assessments of (text, image) pairs.
We show that online RL fine-tuning achieves 
strong text-image alignment
while maintaining high image fidelity by optimizing its objective in an online manner. \emph{Crucially, online training allows evaluation of the reward model and conditional KL divergence beyond the (supervised) training dataset}. 
This offers distinct advantages over supervised fine-tuning, a point we demonstrate empirically. In our empirical comparisons, we also incorporate the KL regularizer in a supervised fine-tuning method for a fair comparison.



Our contributions are as follows:

\begin{itemize} [leftmargin=7mm]
\item We frame the optimization of the expected reward (w.r.t.\ an LHF-reward) of the images generated by a diffusion model given text prompts as an online RL problem. Moreover, we present DPOK: {\bf D}iffusion {\bf P}olicy {\bf O}ptimization with {\bf K}L regularization, which utilizes KL regularization w.r.t.\ the pre-trained text-to-image model as an implicit reward to stabilize RL fine-tuning.

\item We study incorporating KL regularization into supervised fine-tuning of diffusion models, \textcolor{black}{which can mitigate some failure modes (e.g., generating over-saturated images) in~\cite{lee2023aligning}. This also allows a fairer comparison with our RL technique.}


\item We discuss the key differences between supervised fine-tuning and online fine-tuning of text-to-image models (Section~\ref{sec:contrast}).
    
\item Empirically, we show that online fine-tuning is effective in optimizing rewards, which improves text-to-image alignment while maintaining high image fidelity.

\end{itemize}


\section{Related Work} \label{sec:related_work}


\paragraph{Text-to-image diffusion models.}
Diffusion models~\cite{ho2020denoising,sohl2015deep,song2020improved} are a class of generative models that use an iterative denoising process to transform Gaussian noise into samples that follow a learned data distribution. These models have proven to be highly effective in a range of domains, including image generation~\cite{dhariwal2021diffusion}, audio generation~\cite{kong2020diffwave}, 3D synthesis~\cite{poole2022dreamfusion}, and robotics~\cite{chi2023diffusion}. When combined with large-scale language encoders~\cite{clip,t5}, diffusion models have demonstrated impressive performance in text-to-image generation~\cite{dalle2,stablediffusion,imagen}. However, there are still many known weaknesses of existing text-to-image  models, such as compositionality and attribute binding \cite{feng2022training, liu2022compositional} or text rendering \cite{liu2022character}.


\paragraph{Learning from human feedback.} Human assessments of (or preferences over) learned model outcomes have been used to guide learning on a variety of tasks, ranging from learning behaviors \cite{lee2021pebble} to language modeling \cite{bai2022training,ouyang2022training,liu2023chain,stiennon2020learning}. Recent work has also applied such methods to improve the alignment of text-to-image models.
Human preferences are typically gathered at scale by asking annotators to compare generations, and a \emph{reward model} is trained (e.g.,~by fine-tuning a vision-language model such as CLIP \cite{clip} or BLIP~\cite{blip}) to produce scalar rewards well-aligned with the human feedback \cite{imagereward, pickscore}. The reward model is used to improve text-to-image model quality by fine-tuning a pre-trained generative model \cite{lee2023aligning, hps}. Unlike prior approaches, which typically focus on reward-filtered or reward-weighted supervised learning, we develop an online fine-tuning framework with an RL-based objective.  

\paragraph{RL fine-tuning of diffusion models.} \citet{fan2023optimizing} first introduced a method to improve pre-trained diffusion models by integrating policy gradient and GAN training~\cite{goodfellow2020generative}. They used policy gradient with reward signals from the discriminator to update the diffusion model and demonstrated that the fine-tuned model can generate realistic samples with few diffusion steps with DDPM sampling~\cite{ho2020denoising} on relatively simple domains (e.g., CIFAR~\cite{cifar} and CelebA~\cite{celeba}). In this work, we explore RL fine-tuning especially for large-scale text-to-image models using human rewards. We also consider several design choices like adding KL regularization as an implicit reward, and compare RL fine-tuning to supervised fine-tuning.

Concurrent and independent from our work, \citet{Black23train} have also investigated RL fine-tuning to fine-tune text-to-image diffusion models. They similarly frame the fine-tuning problem as a multi-step decision-making problem, and demonstrate that RL fine-tuning can outperform supervised fine-tuning with reward-weighted loss~\cite{lee2023aligning} in optimizing the reward, which aligns with our own observations. Furthermore, our work analyzes KL regularization for both supervised fine-tuning and RL fine-tuning with theoretical justifications, and shows that adopting KL regularization is useful in addressing some failure modes (e.g., deterioration in image quality) of fine-tuned models. For a comprehensive discussion of prior work, we refer readers to Appendix~\ref{app:sup_prior_work}.

\section{Problem Setting} \label{sec:pre}

In this section, we describe our basic problem setting for text-to-image generation of diffusion models.

\paragraph{Diffusion models.}

We consider the use of \emph{denoising diffusion probabilistic models (DDPMs)}~\citep{ho2020denoising} for image generation and draw our notation and problem formulation from~\citep{ho2020denoising}. Let $q_0$ be the data distribution, i.e.,~$x_0 \sim q_0(x_0), \; x_0 \in \mathbb{R}^{n}$. A DDPM approximates $q_0$ with a parameterized model of the form $p_\theta(x_0)=\int p_\theta(x_{0:T})dx_{1:T}$, where $p_{\theta}({x_{0:T}} )= p_T(x_T)\prod_{t=1}^{T}p_{\theta}(x_{t-1}|x_t)$ and the 
\emph{reverse process} is a Markov chain with the following dynamics:
\begin{equation}
\label{eq:reverse}
p(x_T) = \mathcal{N}(0, I), \qquad p_{\theta}(x_{t-1}|x_{t})=\mathcal{N}\big(\mu_\theta(x_{t},t), \Sigma_{t}\big).
\end{equation}
%
A unique characteristic of DDPMs is the exploitation of an approximate posterior $q(x_{1:T}|x_0)$, known as the {\em forward} or {\em diffusion process}, which itself is a Markov chain that adds Gaussian noise to the data according to a variance schedule $\beta_1,\ldots,\beta_T$:
\begin{equation}
\label{eq:forward}
q(x_{1:T}|x_0) = \prod_{t=1}^T q(x_{t}|x_{t-1}), \qquad q(x_{t}|x_{t-1}) = \mathcal{N}(\sqrt{1-\beta_t}\;x_{t-1},\beta_tI). 
\end{equation}
Let $\alpha_{t} = 1-\beta_{t}$, $ \bar{\alpha}_{t} = \prod_{s=1}^{t}\alpha_{s}$, and 
$\tilde{\beta}_{t} = \frac{1 - \bar{\alpha}_{t-1}}{1 - \bar{\alpha}_{t}}\beta_{t}$. \citet{ho2020denoising} adopt the parameterization $\mu_\theta(x_t, t) = \frac{1}{\sqrt{\alpha_t}}\left(x_t - \frac{\beta_t}{\sqrt{1-\bar{\alpha}}_t}\epsilon_\theta(x_t,t)\right)$.

Training a DDPM is performed by optimizing a variational bound on the negative log-likelihood $\mathbb {E}_q[-\log p_\theta(x_0)]$, which is equivalent to optimizing: 
\begin{equation}
\label{eq:training}
\mathbb{E}_q\bigg[\sum_{t=1}^T\text{KL}\big(q(x_{t-1}|x_{t}, x_0)\| p_\theta(x_{t-1}|x_{t})\big)\bigg].
\end{equation}


Note that the variance sequence $(\beta_t)_{t=1}^T \in (0,1)^T$ is chosen such that $\bar{\alpha}_T\approx 0$, and thus, $q(x_T|x_0) \approx \mathcal N(0,I)$. The covariance matrix $\Sigma_{t}$ in~\eqref{eq:reverse} is often set to $\sigma^2_t I$, where $\sigma^2_t$ is either $\beta_{t}$ or $\tilde{\beta}_{t}$, which is not trainable. Unlike the original DDPM, we use a latent diffusion model \cite{stablediffusion}, so $x_t$'s are latent.

\paragraph{Text-to-image diffusion models.} 
Diffusion models are especially well-suited to \emph{conditional} data generation, as required by text-to-image models: one can plug in a classifier as guidance function~\citep{dhariwal2021diffusion}, or can directly train the diffusion model's  conditional distribution with classifier-free guidance \citep{ho2022classifier}.

Given text prompt $z\sim p(z)$, let $q(x_0|z)$ be the data distribution conditioned on $z$. This induces a joint distribution $p(x_0,z)$. During training, 
the same noising process $q$ 
is used
regardless of input $z$, and both the unconditional $\epsilon_\theta(x_t,t)$ and conditional $\epsilon_\theta(x_t,t, z)$ denoising models are learned. For data sampling, 
let $\bar{\epsilon}_\theta = w\epsilon_\theta(x_t,t, z) + (1-w) \epsilon_\theta(x_t,t)$, where $w\geq 1$ is the guidance scale. At test time, given a text prompt $z$, the model generates conditional data according to $p_\theta(x_0|z)$.


\section{Fine-tuning of Diffusion Models}
\label{sec:main}


In this section, we describe our approach for online RL fine-tuning of diffusion models. We first propose a Markov decision process (MDP) formulation for the denoising phase. We then use this MDP and present a policy gradient RL algorithm to update the original diffusion model. The RL algorithm optimizes an objective consisting of the reward and a KL term that ensures the updated model is not too far from the original one. We also present a modified supervised fine-tuning method with KL regularization and compares it with the RL approach.




\subsection{RL Fine-tuning with KL Regularization}

Let $p_\theta(x_{0:T}|z)$ be a text-to-image diffusion model where $z$ is some text prompt distributed according to $p(z)$, and $r(x_0,z)$ be a reward model (typically trained using human assessment of images). 

{\bf MDP formulation:} The denoising process of DDPMs can be modeled as a $T$-horizon MDP: 
\begin{align}
\label{eq:MDP-formulation}
&s_t = (z,x_{T-t}), \quad\;\; a_t = x_{T-t-1}, \quad\;\; P_0(s_0) = \big(p(z),\mathcal N(0,I)\big), \quad\;\; P(s_{t+1} \mid s_t,a_t) = (\delta_z,\delta_{a_t}),\nonumber \\ 
&R(s_t,a_t)=\begin{cases}r(s_{t+1})=r(x_0,z) & \text{if } \; t=T-1, \\ 0 & \text{otherwise}.\end{cases}, \quad\;\; \pi_\theta(a_t \mid s_t) = p_\theta(x_{T-t-1} \mid x_{T-t},z),
\end{align}
in which $s_t$ and $a_t$ are the state and action at time-step $t$, $P_0$ and $P$ are the initial state distribution and the dynamics, $R$ is the reward function, and $\pi_\theta$ is the parameterized policy. As a result, optimizing policy $\pi_\theta$ in~\eqref{eq:MDP-formulation} is equivalent to fine-tuning the underlying DDPM.\footnote{We keep the covariance in DDPM as constant, $\Sigma_t=\Sigma$, and only train $\mu_\theta$  (see Eq.~\ref{eq:reverse}). This would naturally provide a stochastic policy for online exploration.} Finally, we denote by $\delta_z$ the Dirac distribution at $z$. 

It can be seen from the MDP formulation in~\eqref{eq:MDP-formulation} that the system starts by sampling its initial state $s_0$ from the Gaussian distribution $\mathcal N(0,I)$, similar to the first state of the dinoising process $x_T$. Given the MDP state $s_t$, which corresponds to state $x_{T-t}$ of the denoising process, the policy takes the action at time-step $t$ as the next denoising state, i.e.,~$a_t=x_{T-t-1}$. As a result of this action, the system transitions deterministically to a state identified by the action (i.e.,~the next state of the denoising process). The reward is zero, except at the final step in which the quality of the image at the end of the denoising process is evaluated w.r.t.~the prompt, i.e.,~$r(x_0,z)$. 

A common goal in re-training/fine-tuning the diffusion models is to maximize the expected reward of the generated images given the prompt distribution, \textit{i}.\textit{e}.,
%
\begin{align}
\label{eq:online-objective}
\min_\theta \; \mathbb{E}_{p(z)}\mathbb{E}_{p_\theta(x_0|z)}[- r(x_0,z)].
\end{align}
The gradient of this objective function can be obtained as follows:
%
%
\begin{lemma}[A modification of Theorem 4.1 in \cite{fan2023optimizing}]
\label{lemma_pg}
If $p_\theta(x_{0:T}|z)r(x_0,z)$ and $\nabla_{\theta}p_\theta(x_{0:T}|z)r(x_0,z)$ are continuous functions of $\theta$, then we can write the gradient of the objective in~\eqref{eq:online-objective} as 
\begin{equation}
  \label{eq:pg}
  \nabla_{\theta} \mathbb{E}_{p(z)}\mathbb{E}_{p_\theta(x_0|z)}[-r(x_0,z)] = \mathbb{E}_{p(z)}\mathbb{E}_{p_\theta(x_{0:T}|z)}\left[-r(x_0,z) \sum_{t=1}^{T} \nabla_{\theta}\log p_{\theta}(x_{t-1}|x_{t},z)  \right] . 
\end{equation}
\end{lemma}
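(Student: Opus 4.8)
The plan is to prove this by the standard REINFORCE / log-derivative trick, adapted to the multi-step structure of the diffusion reverse process. First I would write the objective as an integral over the full trajectory rather than just the marginal on $x_0$: since $r(x_0,z)$ depends only on $x_0$, we have
\begin{equation*}
\mathbb{E}_{p(z)}\mathbb{E}_{p_\theta(x_0|z)}[-r(x_0,z)] = \int p(z)\int p_\theta(x_{0:T}|z)\,(-r(x_0,z))\,dx_{0:T}\,dz,
\end{equation*}
using that $\int p_\theta(x_{1:T}|x_0,z)\,dx_{1:T}=1$. This reduces the problem to differentiating under the integral sign: the continuity hypotheses on $p_\theta(x_{0:T}|z)r(x_0,z)$ and $\nabla_\theta p_\theta(x_{0:T}|z)r(x_0,z)$ are exactly what is needed (together with a dominating-function argument, which on a compact parameter neighborhood follows from continuity) to justify interchanging $\nabla_\theta$ with the integrals, exactly as in Theorem 4.1 of \cite{fan2023optimizing}.

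Once the gradient is inside, I would apply the identity $\nabla_\theta p_\theta(x_{0:T}|z) = p_\theta(x_{0:T}|z)\nabla_\theta \log p_\theta(x_{0:T}|z)$, giving
\begin{equation*}
\nabla_\theta \mathbb{E}_{p(z)}\mathbb{E}_{p_\theta(x_0|z)}[-r(x_0,z)] = \mathbb{E}_{p(z)}\mathbb{E}_{p_\theta(x_{0:T}|z)}\big[-r(x_0,z)\,\nabla_\theta \log p_\theta(x_{0:T}|z)\big].
\end{equation*}
The final step is to expand the score of the trajectory. Since the reverse process factorizes as $p_\theta(x_{0:T}|z) = p_T(x_T)\prod_{t=1}^T p_\theta(x_{t-1}|x_t,z)$ and the prior $p_T(x_T)=\mathcal{N}(0,I)$ does not depend on $\theta$, taking logs and then $\nabla_\theta$ yields $\nabla_\theta \log p_\theta(x_{0:T}|z) = \sum_{t=1}^T \nabla_\theta \log p_\theta(x_{t-1}|x_t,z)$, which substituted into the previous line gives exactly \eqref{eq:pg}.

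The main obstacle — and really the only nontrivial point — is rigorously justifying the interchange of differentiation and integration, i.e.\ the Leibniz rule step, since the domain of integration ($\mathbb{R}^n$ for each $x_t$, and the text space) is non-compact. I would handle this the same way \cite{fan2023optimizing} does: invoke the stated continuity assumptions to obtain, on a small closed ball around any fixed $\theta$, a uniform integrable bound on $\nabla_\theta p_\theta(x_{0:T}|z)r(x_0,z)$ (using that Gaussian reverse kernels have rapidly decaying tails, so the continuous integrand is dominated), and then apply the dominated convergence theorem in its derivative form. Everything else is the bookkeeping of the score function on a product measure, which is routine; I would keep that part brief and point to \cite{fan2023optimizing} for the measure-theoretic details, noting only the modification that here the reward and kernels are additionally conditioned on $z$ and we take an outer expectation over $p(z)$, which does not affect the argument since $p(z)$ is independent of $\theta$.
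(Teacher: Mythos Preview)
Your proposal is correct and follows essentially the same approach as the paper's proof: expand the marginal expectation to a trajectory integral, interchange $\nabla_\theta$ with the integral using the continuity hypotheses, apply the log-derivative trick, and then use the Markov factorization $p_\theta(x_{0:T}|z)=p_T(x_T)\prod_t p_\theta(x_{t-1}|x_t,z)$ to drop the $\theta$-independent prior term. If anything, you are more explicit than the paper about the dominated-convergence justification for the Leibniz step.
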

\begin{proof}
    We present the proof in Appendix~\ref{proof_pg}.
\end{proof}
Equation~(\ref{eq:pg}) is equivalent to the gradient used by the popular policy gradient algorithm, REINFORCE, to update a policy in the MDP~\eqref{eq:MDP-formulation}. The gradient in~\eqref{eq:pg} is estimated from trajectories $p_\theta(x_{0:T}|z)$ generated by the current policy, and then used to update the policy $p_\theta(x_{t-1}|x_t,z)$ in an online fashion. 




Note that REINFORCE is not the only way to solve~\eqref{eq:online-objective}. Alternatively, one could compute the gradient through the trajectories to update the model; but the multi-step nature of diffusion models makes this approach memory inefficient and potentially prone to numerical instability. Consequently, scaling it to high-resolution images becomes challenging.
For this reason, we adopt policy gradient to train large-scale diffusion models like Stable Diffusion~\cite{stablediffusion}.



\paragraph{Adding KL regularization.} 


The risk of fine-tuning purely based on the reward model learned from
human or AI feedback is that the model may overfit to the reward and discount the ``skill'' of the initial diffusion model to a greater degree than warranted. To avoid this phenomenon, similar to~\cite{ouyang2022training,stiennon2020learning}, we add the KL between the fine-tuned and pre-trained models as a regularizer to the objective function. Unlike the language models in which the KL regularizer is computed over the entire sequence/trajectory (of tokens), in text-to-image models, it makes sense to compute it only for the final image, i.e.,~$\text{KL}\big(p_\theta (x_0 | z) \| p_{\text{pre}}(x_0 | z)\big)$. Unfortunately, $p_\theta (x_0 | z)$ is a marginal (see the integral in Section~\ref{sec:pre}) and its closed-form is unknown. As a result, we propose to add an upper-bound of this KL-term to the objective function.

\begin{lemma}
\label{lemma_online_kl} 
Suppose $p_\text{pre}(x_{0:T}|z)$ and $p_\theta(x_{0:T}|z)$ are Markov chains conditioned on the text prompt $z$ that both start at $x_T\sim\mathcal{N}(0,I)$. Then, we have
\begin{align}
\label{eq:KL-upper-bound}
\mathbb{E}_{p(z)}\! [\text{\em KL}(p_\theta (x_0 | z)) \| p_{\text{pre}}(x_0 | z))] \!\leq \! \mathbb{E}_{p(z)}\! \left[\sum_{t=1}^{T}\! \mathbb{E}_{p_\theta(x_{t}|z)}\! \big[\! \text{\em KL}\big(p_\theta(x_{t-1} | x_t,z) \| p_{\text{pre}}(x_{t-1} | x_t, z)\! \big)\! \big]\!\right] \!.
\end{align}
\end{lemma}
We report the proof of Lemma~\ref{lemma_online_kl} in Appendix~\ref{app: online_kl}. Intuitively, this lemma tells us that the divergence between the two distributions over the output image $x_0$ is upper-bounded by the sum of the divergences between the distributions over latent $x_t$ at each diffusion step.

Using the KL upper-bound in~\eqref{eq:KL-upper-bound}, we propose the following objective for regularized training: 
\vspace{-5mm}
\small
\begin{equation}
\label{eq:online}
\begin{aligned}
\mathbb{E}_{p(z)}\left[\alpha\mathbb{E}_{p_\theta(x_{0:T}|z)}[- r(x_0,z)] \right.
\left . + \beta\sum_{t=1}^{T}\mathbb{E}_{p_\theta(x_{t}|z)}\big[\text{KL}\big(p_\theta(x_{t-1}|x_t,z) \| p_{\text{pre}}(x_{t-1}|x_t, z)\big)\big]\right],   
\end{aligned}
\end{equation}
\normalsize
where $\alpha, \beta$ are the reward and KL weights, respectively. We use the following gradient to optimize the objective~\eqref{eq:online}: 
\small
\begin{equation}
\label{eq:gradient}
\begin{aligned}
\mathbb{E}_{p(z)}\mathbb{E}_{ p_\theta(x_{0:T}|z)}\left[-\alpha r(x_0,z) \sum_{t=1}^{T} \nabla_{\theta}\log p_{\theta}(x_{t-1}|x_{t},z)\right.
\left.+\beta \sum_{t=1}^{T} \nabla_{\theta} \text{KL}\big(p_\theta(x_{t-1}|x_t,z) \| p_{\text{pre}}(x_{t-1}|x_t, z)\big) \right].
\end{aligned}
\end{equation}
\normalsize
Note that~\eqref{eq:gradient} has one term missing from the exact gradient of~\eqref{eq:online} (see Appendix~\ref{app: partial_gradient}). Removing this term is for efficient training.
\textcolor{black}{The pseudo-code of our algorithm, which we refer to as DPOK, is summarized in Algorithm~\ref{algo: RL}.} To reuse historical trajectories and be more sample efficient, we can also use importance sampling and clipped gradient, similar to~\cite{schulman2017proximal}. We refer readers to Appendix~\ref{app:is} for these details. 


\begin{algorithm}[t]
  \caption{DPOK: Diffusion policy optimization with KL regularization}

\textbf{Input}: reward model $r$, pre-trained model $p_\text{pre}$, current model $p_\theta$, batch size $m$, text distribution $p(z)$

\begin{algorithmic}
\STATE Initialize $p_\theta = p_\text{pre}$
\WHILE{$\theta$ not converged}
\STATE Obtain $m$ i.i.d. samples by first sampling $z\sim p(z)$ and then ${x_{0:T}} \sim p_{\theta}({x_{0:T}}|z)$


\STATE Compute the gradient using Eq.~(\ref{eq:gradient}) and update $\theta$
\ENDWHILE
\end{algorithmic}
\label{algo: RL}
\textbf{Output}: Fine-tuned diffusion model $p_\theta$
\end{algorithm}

\subsection{Supervised Learning with KL Regularization}


We now introduce KL regularization into supervised fine-tuning (SFT), which allows for a more meaningful comparison with our KL-regularized RL algorithm (DPOK).
%
%
We begin with a supervised fine-tuning objective similar to that used in \citep{lee2023aligning}, i.e.,
\begin{equation}
\mathbb{E}_{p(z)}\mathbb{E}_{p_{\text{pre}}(x_0|z)}[-r(x_0,z)\log p_\theta(x_0|z)].
\end{equation}
%
%
%
%
%
To compare with RL fine-tuning, we augment the supervised objective with a similar KL regularization term. 
Under the supervised learning setting, we consider $\text{KL}(p_{\text{pre}}(x_0|z)\|p_\theta(x_0|z))$, which is equivalent to minimizing $\mathbb{E}_{p_{\text{pre}}(x_0|z)}[-\log{p_\theta(x_0|z)}]$, given any prompt $z$. Let $q(x_t)$ be the forward process used for training. Since the distribution of $x_0$ is generally not tractable, we use approximate upper-bounds in the lemma below (see derivation in Appendix~\ref{proof_sup_kl}).

\begin{lemma}
\label{lemma_sup_kl}
Let $\gamma$ be the regularization weight and 
$\tilde{\mu}_t(x_t, x_0) :=\frac{\sqrt{\bar{\alpha}_{t-1}}\beta_t}{1-\bar{\alpha}_t}x_0 + \frac{\sqrt{\alpha_t}(1-\bar{\alpha}_{t-1})}{1-\bar{\alpha}_t}x_t$. Assume $r(x_0,z) + \gamma > 0 $, for all $x_0$ and $z$. 
Then, we have
\begin{equation}
\begin{aligned}
\label{tighter_bound}
&\mathrel{\phantom{=}}\mathbb{E}_{p(z)}\mathbb{E}_{p_{\text{pre}}(x_0|z)}[-(r(x_0,z)+\gamma)\log{p_\theta(x_0|z)}] \\
&\leq \mathbb{E}_{p(z)}\mathbb{E}_{p_\text{pre}(x_0|z)}[(r(x_0,z)+\gamma)\sum_{t>1} \mathbb{E}_{q(x_t|x_0, z)}[\frac{1}{2\sigma_t^2}||\tilde{\mu}_t(x_t, x_0)-\mu_\theta(x_t,t, z)||^2]]+ C_1.
\end{aligned}
\end{equation}
Moreover, we also have another weaker upper-bound in which $C_1$ and $C_2$ are two constants:
\begin{equation}
\begin{aligned}
\label{weaker_bound}
&\mathrel{\phantom{=}}\mathbb{E}_{p(z)}\mathbb{E}_{p_{\text{pre}}(x_0|z)}[-(r(x_0,z)+\gamma)\log{p_\theta(x_0|z)}]\\
&\leq \mathbb{E}_{p(z)}\mathbb{E}_{p_{\text{pre}}(x_0|z)}[\sum_{t>1} \mathbb{E}_{q(x_t|x_0, z)}[\frac{r(x_0,z)||\tilde{\mu}_t(x_t, x_0)-\mu_\theta(x_t,t, z)||^2}{2\sigma_t^2}\\
&\mathrel{\phantom{=}}+\frac{\gamma(||\mu_{\text{pre}}(x_t,t, z) - \mu_{\theta}(x_t,t, z)||^2)}{2\sigma_t^2}]]+C_2.\\
\end{aligned}   
\end{equation}
\end{lemma}

In Lemma~\ref{lemma_sup_kl}, we introduce KL regularization (Eq.~(\ref{tighter_bound})) for supervised fine-tuning, which can be incorporated by adjusting the original reward with a shift factor $\gamma$ in the reward-weighted loss, smoothing the weighting of each sample towards the uniform distribution.\footnote{Intuitively, adjusting $\gamma$ is similar to adjusting the temperature parameter in reward-weighted regression \cite{rwr}.} 
We refer to this regularization as KL-D since it is based only on data from the pre-trained model. The induced supervised training objective for KL-D is as follows:
\begin{equation}
\mathbb{E}_{p(z)}\mathbb{E}_{p_\text{pre}(x_0|z)}\left[(r(x_0,z)+\gamma)\sum_{t>1} \mathbb{E}_{q(x_t|x_0, z)}\left[\frac{||x_0 - f_\theta(x_t, t, z)||^2}{2\sigma_t^2}\right]\right].
\end{equation}

We also consider another KL regularization presented in Eq.~(\ref{weaker_bound}). This KL regularization can be implemented by introducing an additional term in the reward-weighted loss. This extra term penalizes the $L_2$-distance between the denoising directions derived from the pre-trained and current models. We refer to it as KL-O because it also regularizes the output from the current model to be close to that from the pre-trained model. The induced supervised training objective for KL-O is as follows:
\begin{equation*}
\mathbb{E}_{p(z)}\mathbb{E}_{p_\text{pre}(x_0|z)}\left[\sum_{t>1} \mathbb{E}_{q(x_t|x_0, z)}\left[\frac{r(x_0,z)\|x_0 - f_{\theta}(x_t, t,z)\|^2+\gamma\|f_{\text{pre}}(x_t, t,z)-f_{\theta}(x_t, t, z)\|^2}{2\sigma_t^2}\right]\right].
\end{equation*}

We summarize our supervised fine-tuning in Algorithm~\ref{alg:sup}. 
Note that in comparison to online RL training, our supervised setting only requires a pre-trained diffusion model and no extra/new datasets.



\begin{algorithm}[t]
  \caption{Supervised fine-tuning with KL regularization}

\textbf{Input}: Reward model $r$, pre-trained diffusion model $p_\text{pre}$, diffusion model $p_\theta$, regularization weight $\gamma$, batch size $n$, the number of training samples $M$

\begin{algorithmic}
\STATE Initialize $p_\theta = p_\text{pre}$
\STATE Collect $M$ samples from pre-trained model: $\mathcal{D} = \{(x_0, z)\sim p_\text{pre}(x_0|z)p(z) |\forall i \in \{1,...,M\}\}$
\WHILE{$\theta$ not converged}
 \STATE Obtain $n$ i.i.d. samples $\{x_0, z\}$ from training dataset $\mathcal{D}$
 \STATE Sample $x_t$ given $x_0$, $t \sim [1,T]$
 \STATE For KL-D, compute the gradient   $\nabla_\theta \frac{(r(x_0,z)+\gamma)||x_0 - f_\theta(x_t, t, z)||^2}{2\sigma_t^2}$, update $\theta$

 \STATE For KL-O, compute the gradient $\nabla_\theta \frac{r(x_0,z)\|x_0 - f_{\theta}(x_t, t,z)\|^2+\gamma\|f_{\text{pre}}(x_t, t,z)-f_{\theta}(x_t, t, z)\|^2}{2\sigma_t^2}$, update $\theta$
 
\ENDWHILE
\end{algorithmic}
\label{alg:sup}
\textbf{Output}: Fine-tuned diffusion model $p_\theta$
\end{algorithm}

\subsection{Online RL vs.~Supervised Fine-tuning}
\label{sec:contrast}

We outline key differences between online RL and supervised fine-tuning:

\begin{itemize} [leftmargin=7mm]
    \item \textbf{Online versus offline distribution.} We first contrast their objectives. The online RL objective is to find a new image distribution that maximizes expected reward---this can have much different support than the pre-trained distribution. In supervised fine-tuning, the objective only encourages the model to ``imitate'' good examples from the supervised dataset, which always lies in the support of the pre-trained distribution.

    \item \textbf{Different KL regularization.} The methods also differ in their use of KL regularization. 
    For online training, we evaluate conditional KL in each step using online samples, while for supervised training we evaluate conditional KL only using supervised samples. Moreover, online KL regularization can be seen as an extra reward function to encourage small divergence w.r.t. the pre-trained model for online optimization, while supervised KL induces an extra shift in the original reward for supervised training as shown in Lemma~\ref{lemma_sup_kl}. 

    \item \textbf{Different evaluation of the reward model.} Online RL fine-tuning evaluates the reward model using the updated distribution, while supervised fine-tuning evaluates the reward on the fixed pre-training data distribution. 
    As a consequence, our online RL optimization should derive greater benefit from the generalization ability of the reward model.
    
\end{itemize}

For these reasons, we expect online RL fine-tuning and supervised fine-tuning to generate rather different behaviors. Specifically, online fine-tuning should be better at maximizing the combined reward (human reward and implicit KL reward) than the supervised approach
(similar to the difference between online learning and weighted behavior cloning in reinforcement learning).
\section{Experimental Evaluation} \label{sec:exp}

We now describe a set of experiments designed to test the efficacy of different fine-tuning methods.

\subsection{Experimental Setup}

As our baseline generative model, 
we use Stable Diffusion v1.5~\cite{stablediffusion}, which has been pre-trained on large image-text datasets~\cite{laion400m,laion-5b}. For compute-efficient fine-tuning, we use Low-Rank Adaption (LoRA)~\cite{lora}, which freezes the parameters of the pre-trained model and introduces low-rank trainable weights. 
We apply LoRA to the UNet~\cite{unet} module and only update the added weights.
For the reward model, we use ImageReward~\cite{imagereward} which is trained on a large dataset comprised of human assessments of images. 
Compared to other scoring functions such as CLIP~\cite{clip} or BLIP~\cite{blip}, ImageReward has a better correlation with human judgments, making it the preferred choice for fine-tuning our baseline diffusion model (see Appendix~\ref{app:imagereward} for further justification). Further experimental details (e.g., model architectures, final hyper-parameters) are provided in Appendix~\ref{app:exp}. 
We also provide more samples for qualitative comparison in Appendix~\ref{app:qualitative}.


\begin{figure} [h] \centering
\includegraphics[width=.9\textwidth]{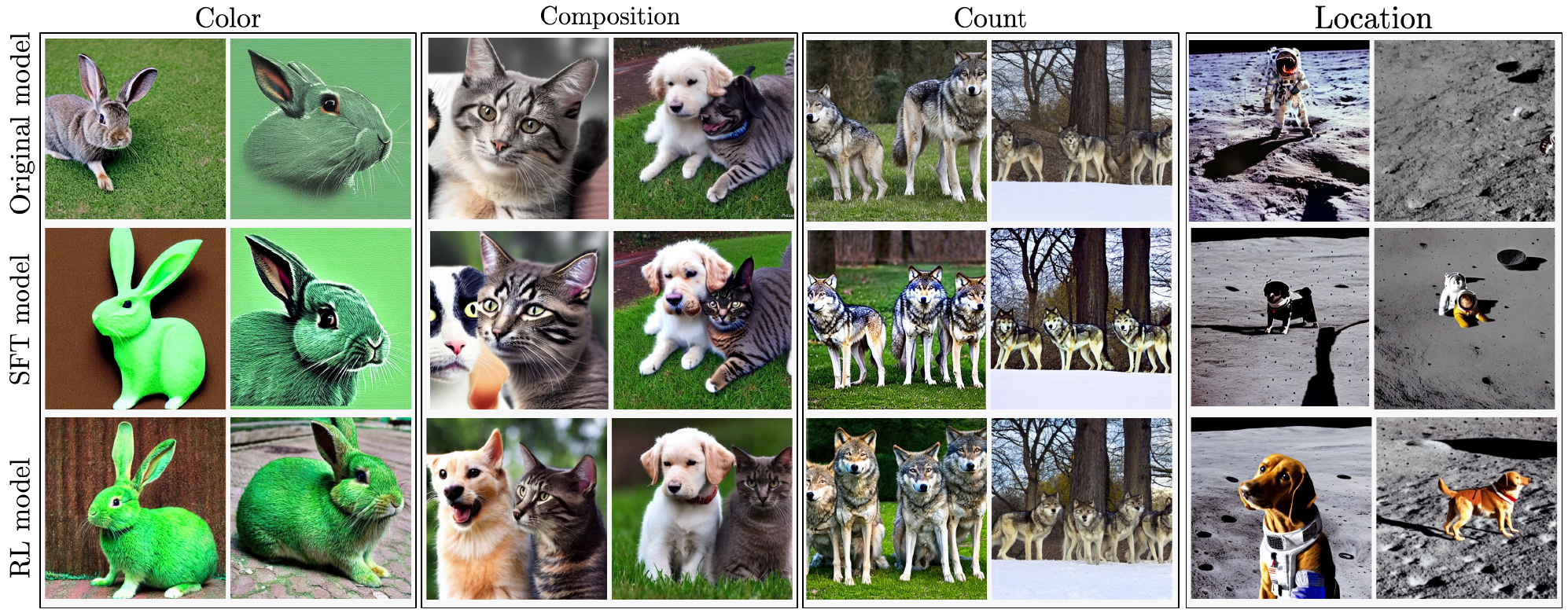}
\caption{Comparison of images generated by the original Stable Diffusion model, supervised fine-tuned (SFT) model, and RL fine-tuned model. Images in the same column are generated with the same random seed. Images from seen text prompts: ``A green colored rabbit'' (color), ``A cat and a dog'' (composition), ``Four wolves in the park'' (count), and ``A dog on the moon'' (location).}
\label{fig:main_fig_0}
\vspace{-0.2in}
\end{figure}

\begin{figure*} [t!] \centering
\subfigure[ImageReward score]
{
\includegraphics[width=0.32\textwidth]{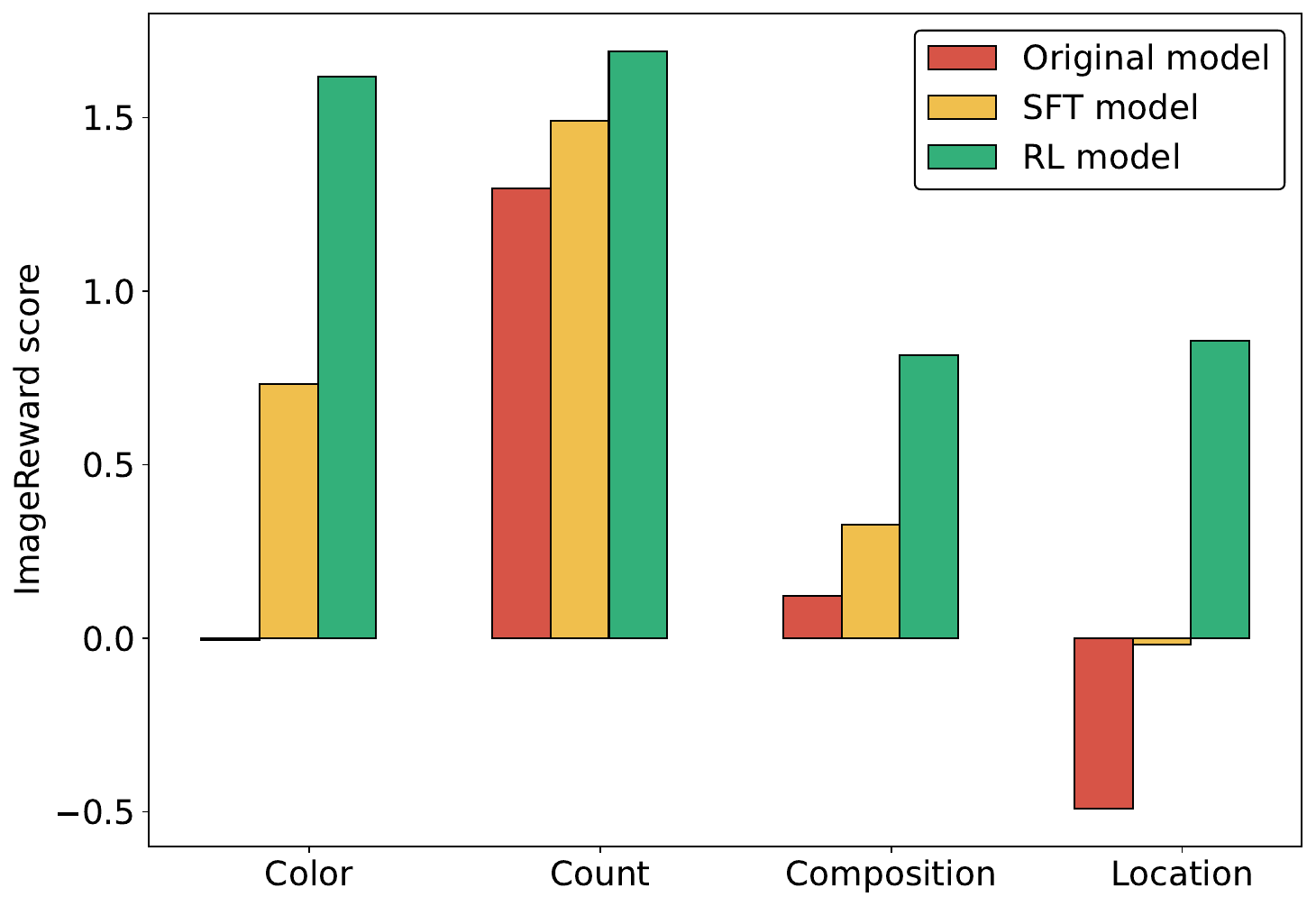}
\label{fig:reward_seen}} 
\subfigure[Aesthetic score]
{\includegraphics[width=0.32\textwidth]{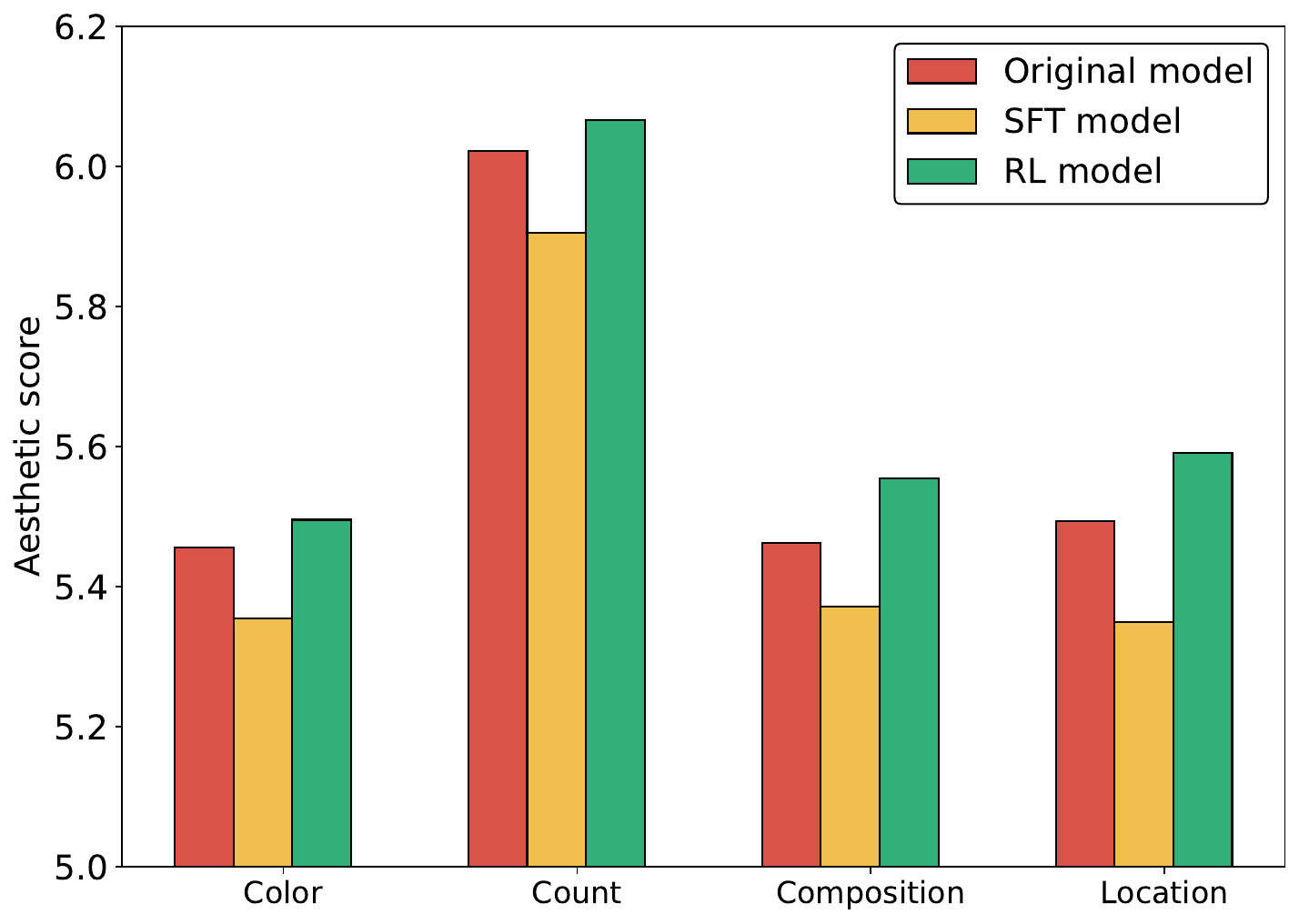}
\label{fig:reward_unseen}}
\subfigure[Human evalution]
{\includegraphics[width=0.32\textwidth]{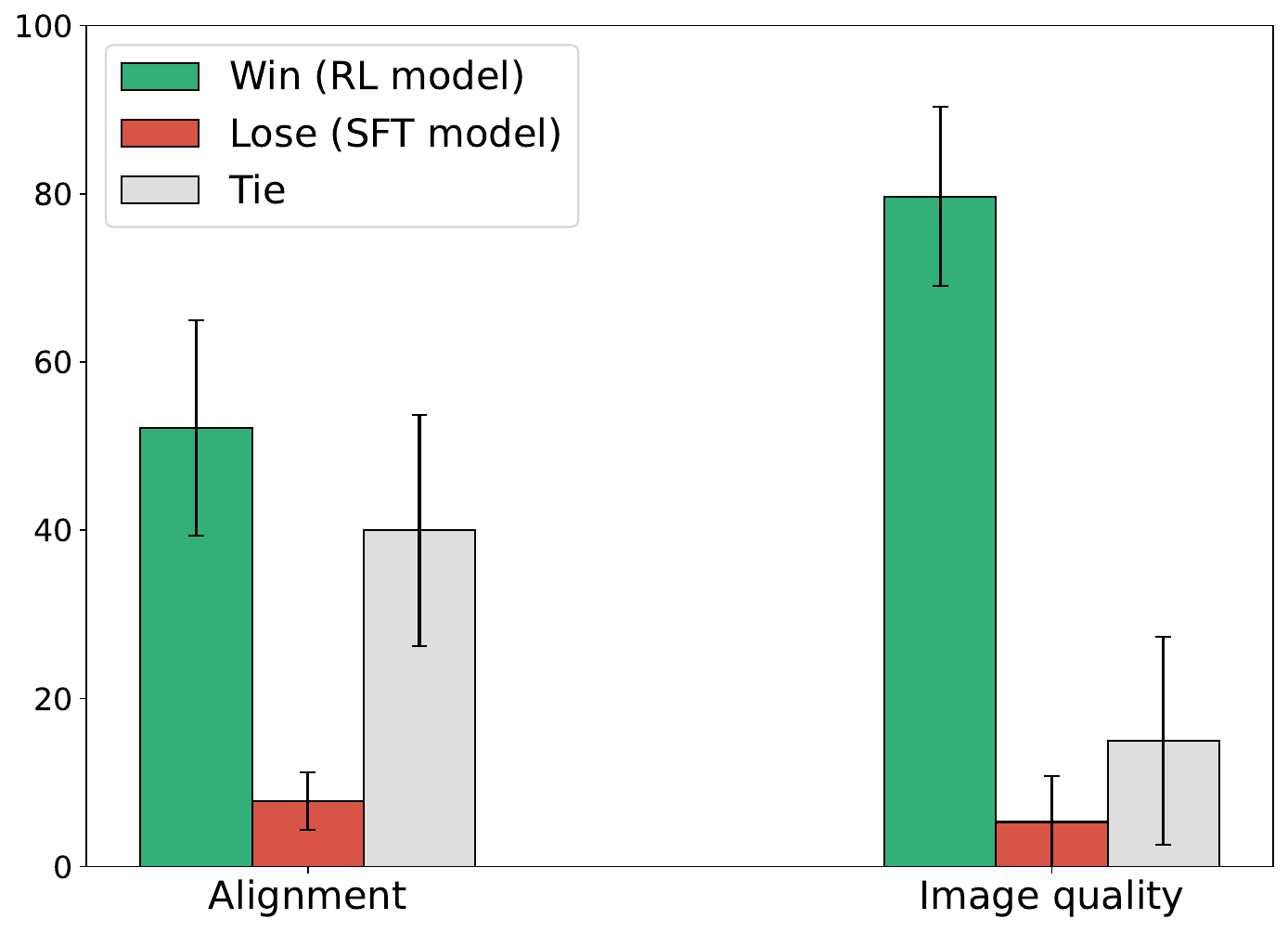}
\label{fig:human}}
\caption{(a) ImageReward scores and (b) Aesthetic scores of three models: the original model, supervised fine-tuned (SFT) model, and RL fine-tuned model. ImageReward and Aesthetic scores are averaged over 50 samples from each model. (c) Human preference rates between RL model and SFT model in terms for image-text alignment and image quality. The results show the mean and standard deviation averaged over eight independent human raters.}
\label{fig:reward_main}
\vspace{-0.2in}
\end{figure*}

\subsection{Comparison of Supervised and RL Fine-tuning} \label{sec:main_exp}

We first evaluate the performance of the original and fine-tuned text-to-image models w.r.t.\ specific capabilities, such as generating objects with specific colors, counts, or locations;  and composing multiple objects (or \emph{composition}). For both RL and SFT training we include KL regularization (see hyperparameters in Appendix~\ref{app:exp}). For SFT, we adopt KL-O since we found it is more effective than KL-D in improving the visual quality of the SFT model (see the comparison between KL-O and KL-D in Appendix~\ref{sec:kldvsklo}).
To systematically analyze the effects of different fine-tuning methods, we adopt a straightforward setup that uses one text prompt during fine-tuning.
As training text prompts, we use ``A green colored rabbit'', ``A cat and a dog'', ``Four wolves in the park'', and ``A dog on the moon''. These test the models' ability to handle prompts involving {\em color}, {\em composition}, {\em counting} and {\em location}, respectively.
For supervised fine-tuning, we use 20K images generated by the original model, which is the same number of (online) images used by RL fine-tuning.

Figure~\ref{fig:reward_seen} compares ImageReward scores of images generated by the different models (with the same random seed). 
We see that both SFT and RL fine-tuning improve the ImageReward scores on the training text prompt.
This implies the fine-tuned models can generate images that are better aligned with the input text prompts than the original model because ImageReward is trained on human feedback datasets to evaluate image-text alignment.
Figure~\ref{fig:main_fig_0} indeed shows that 
fine-tuned models add objects to match the number (e.g., adding more wolves in ``Four wolves in the park''), and replace incorrect objects with target objects (e.g., replacing an astronaut with a dog in ``A dog on the moon'') compared to images from the original model.
They also avoid obvious mistakes like generating a rabbit with a green background given the prompt ``A green colored rabbit''. Of special note, we find that the fine-tuned models generate better images than the original model on several unseen text prompts consisting of unseen objects in terms of image-text alignment (see Figure~\ref{fig:main_fig_2} in Appendix~\ref{app:multiprompt}).

As we can observe in Figure~\ref{fig:reward_seen}, RL models enjoy higher ImageReward than SFT models when evaluated with the same training prompt in different categories respectively, due to the benefit of online training as we discuss in Section~\ref{sec:contrast}.
We also evaluate the image quality of the three models using the aesthetic predictor~\cite{laion-5b}, which is trained to predict the aesthetic aspects of generated images.\footnote{The aesthetic predictor had been utilized to filter out the low-quality images in training data for Stable Diffusion models.} 
Figure~\ref{fig:reward_unseen} shows that supervised fine-tuning degrades image quality relative to the RL approach (and sometimes relative to the pre-trained model), even with KL regularization which is intended to blunt such effects.
For example, the supervised model often generates over-saturated images, corroborating the observations of \citet{lee2023aligning}. By contrast, the RL model generates more natural images that are at least as well-aligned with text prompts. 





We also conduct human evaluation as follows: We collect 40 images randomly generated from each prompt, resulting in a total of 160 images for each model. 
Given two (anonymized) sets of four images from the same random seeds, one from RL fine-tuned model and one from the SFT model, we ask human raters to assess which one is better w.r.t. image-text alignment and image quality.
Each query is evaluated by 8 independent raters and we report the average win/lose rate in Figure~\ref{fig:human}.
The RL model consistently outperforms the SFT model on both alignment and image quality.

\subsection{The Effect of KL Regularization
} \label{sec:kl_abl}


\begin{figure*} [t] \centering
\vspace{-0.1in}
\subfigure[ImageReward and Aesthetic scores]
{
\includegraphics[width=0.46\textwidth]{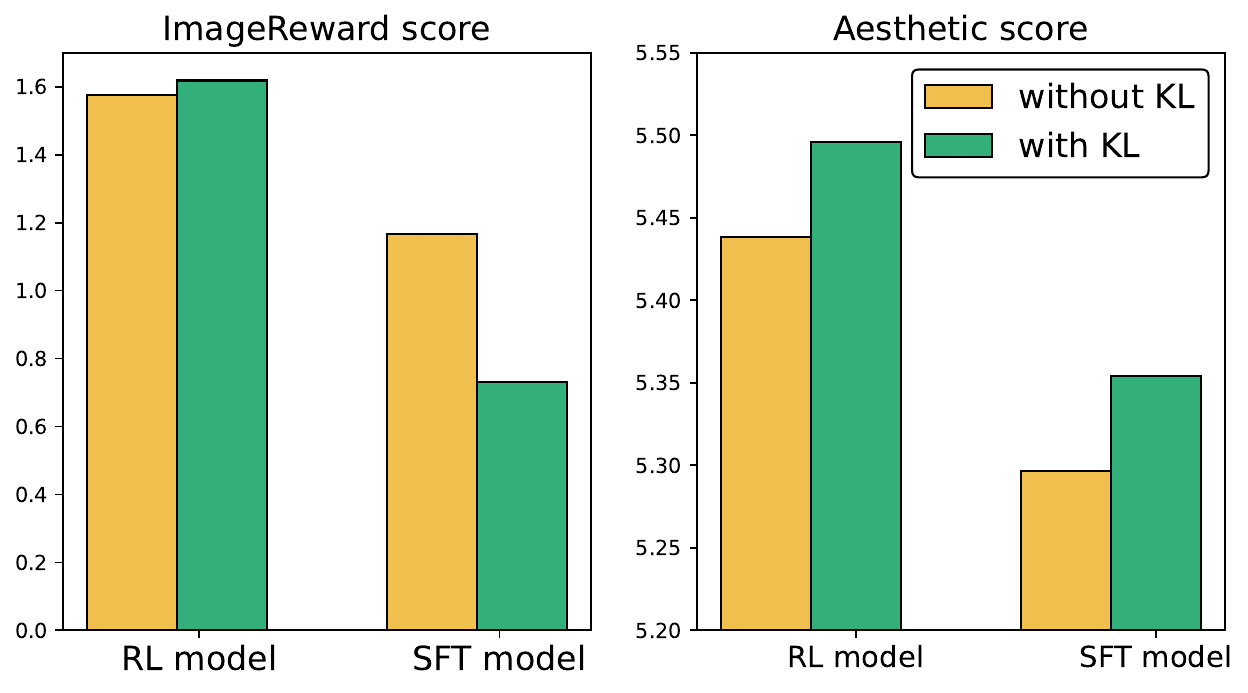}
\label{fig:im_kl}} 
\subfigure[Generated images]
{\includegraphics[width=0.5\textwidth]{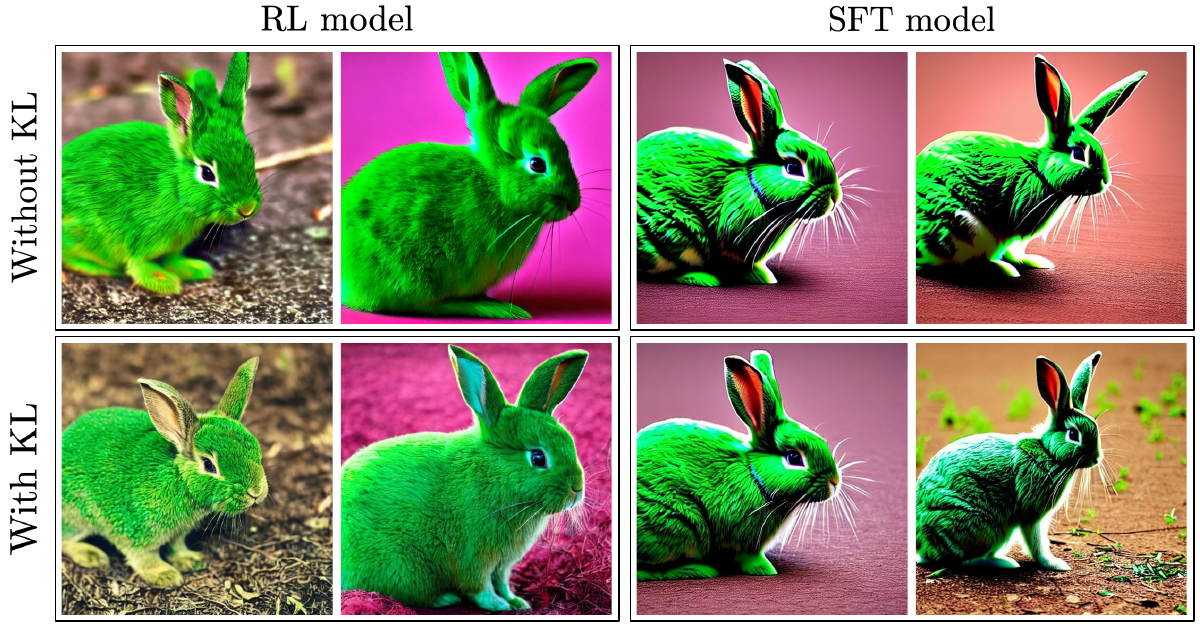}
\label{fig:img_kl}}
\vspace{-0.1in}
\caption{Ablation study of KL regularization in both SFT and RL training, trained on a single prompt ``A green colored rabbit''. (a) ImageReward and Aesthetic scores are averaged over 50 samples from each model. (b) Images generated by RL models and SFT models optimized with and without KL regularization. Images in the same column are generated with the same random seed.}
\vspace{-0.1in}
\label{fig:im_as_kl}
\end{figure*}
To demonstrate the impact of KL regularization on both supervised fine-tuning (SFT) and online RL fine-tuning, 
we conduct an ablation study specifically fine-tuning the pre-trained model with and without KL regularization on ``A green colored rabbit''. 

Figure~\ref{fig:im_kl} shows that KL regularization in online RL is effective in attaining both high reward and aesthetic scores. We observe that the RL model without KL regularization can generate lower-quality images (e.g., over-saturated colors and unnatural shapes) as shown in Figure~\ref{fig:img_kl}. In the case of SFT with KL-O (where we use the same configuration as Section~\ref{sec:main_exp}), we find that the KL regularization can mitigate some failure modes of SFT without KL and improve aesthetic scores, but generally suffers from lower ImageReward. We expect that this difference in the impact of KL regularization is due to the different nature of online and offline training---KL regularization is only applied to fixed samples in the case of SFT while KL regularization is applied to new samples per each update in the case of online RL (see Section~\ref{sec:contrast} for related discussions).

\begin{figure} [t] \centering
\includegraphics[width=.99\textwidth]{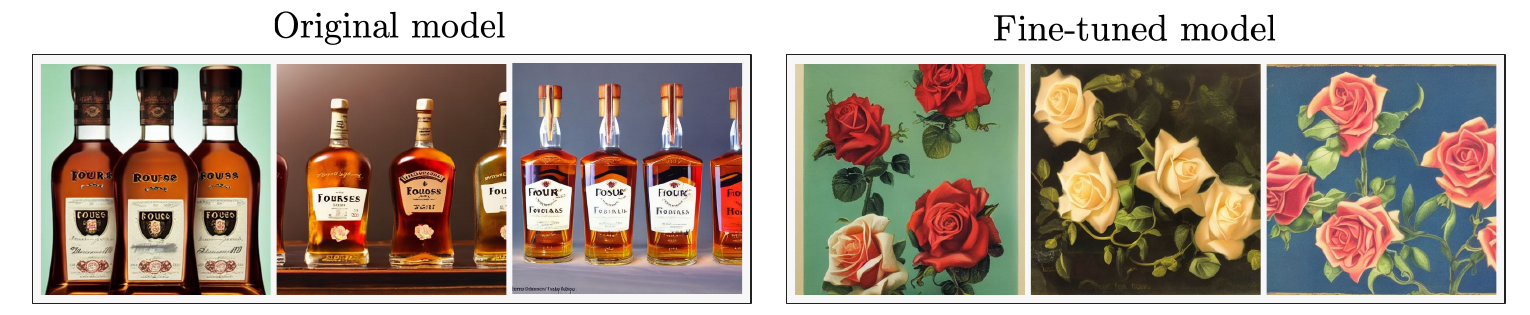}
\caption{Comparison of images generated by the original model and RL fine-tuned model on text prompt ``Four roses''. The original model, which is trained on large-scale datasets from the web~\cite{laion-5b,laion400m}, tends to produce whiskey-related images from ``Four roses'' due to the existence of a whiskey brand bearing the same name as the prompt. In contrast, RL fine-tuned model with ImageReward generates images associated with the flower ``rose''.}
\label{fig:four}
\vspace{-0.2in}
\end{figure}


\subsection{Reducing Bias in the Pre-trained Model}

To see the benefits of optimizing for ImageReward, we explore the effect of RL fine-tuning in reducing bias in the pre-trained model.  Because the original Stable Diffusion model is trained on large-scale datasets extracted from the web~\cite{laion400m,laion-5b}, it can encode some biases in the training data. As shown in Figure~\ref{fig:four}, we see that the original model tends to produce whiskey-related images given the prompt ``Four roses'' (which happens to be the brand name of a whiskey), which is not aligned with users' intention in general.
To verify whether maximizing a reward model derived from human feedback can mitigate this issue, we fine-tune the original model on the ``Four roses'' prompt using RL fine-tuning.
Figure~\ref{fig:four} shows that the fine-tuned model generates images with roses (the flower) because the ImageReward score is low on the biased images (ImageReward score is increased to 1.12 from -0.52 after fine-tuning).
This shows the clear benefits of learning from human feedback to better align existing text-to-image models with human intentions.



\subsection{Fine-tuning on Multiple Prompts} \label{sec:multi_sup}
We further verify the effectiveness of the proposed techniques for fine-tuning text-to-image models on multiple prompts simultaneously. 
We conduct online RL training with 104 MS-CoCo prompts and 183 Drawbench prompts, respectively (the prompts are randomly sampled during training). Detailed configurations are provided in Appendix~\ref{app:multiprompt}. Specifically, we also learn a value function for variance reduction in policy gradient which shows benefit in further improving the final reward (see Appendix~\ref{app: value learning} for details.) We report both ImageReward and the aesthetic score of the original and the RL fine-tuned models. For evaluation, we generate 30 images from each prompt and report the average scores of all images. The evaluation result is reported in Table~\ref{table:multiple_coco_draw} with sample images in Figure ~\ref{fig:drawbench-coco} in Appendix~\ref{app:multiprompt}, showing that RL training can also significantly improve the ImageReward score while maintaining a high aesthetic score with much larger sets of training prompts.

\begin{table}[h!]
\begin{center}
\begin{tabular}{l|cc|cc}
\toprule
& \multicolumn{2}{c|}{MS-CoCo} 
& \multicolumn{2}{c}{Drawbench}  \\ 
&  {Original model} & RL model &  {Original model} & RL model \\
\midrule
ImageReward score & 0.22 & { 0.55} & 0.13 & { 0.58} \\
Aesthetic score  & 5.39 & 5.43 & 5.31 & 5.35 \\
\bottomrule
\end{tabular}
\end{center}
\caption{ImageReward scores and Aesthetic scores from the original model, and RL fine-tuned model on multiple prompts from MS-CoCo (104 prompts) and Drawbench (183 prompts). We report the average ImageReward and Aesthetic scores across 3120 and 5490 images on MS-CoCo and Drawbench, respectively (30 images per each prompt).}
\label{table:multiple_coco_draw}
\vspace{-0.2in}
\end{table}





\section{Discussions} \label{sec:disscussion}

In this work, we propose \textcolor{black}{DPOK, an algorithm to fine-tune a text-to-image diffusion model using policy gradient with KL regularization.} We show that online RL fine-tuning outperforms simple supervised fine-tuning in improving the model's performance. Also, we conduct an analysis of KL regularization for both methods and discuss the key differences between RL fine-tuning and supervised fine-tuning. 
We believe our work demonstrates the potential of {\em reinforcement learning from human feedback} in improving text-to-image diffusion models.






\vspace{-0.05in}
\paragraph{Limitations, future directions.} Several limitations of our work suggest interesting future directions: \textcolor{black}{(a) As discussed in Section~\ref{sec:multi_sup}, fine-tuning on multiple prompts requires longer training time, hyper-parameter tuning, and engineering efforts.} More efficient training
with a broader range of diverse and complex prompts would be an interesting future direction to explore. (b) Exploring advanced policy gradient methods 
can be useful for further performance improvement. Investigating the applicability and benefits of these methods could lead to improvements in the fine-tuning process.





\paragraph{Broader Impacts} Text-to-image models can offer societal benefits across fields such as art and entertainment, but they also carry the potential for misuse. Our research allows users to finetune these models towards arbitrary reward functions. This process can be socially beneficial or harmful depending on the reward function used. Users could train models to produce less biased or offensive imagery, but they could also train them to produce misinformation or deep fakes. Since many users may follow our approach of using open-sourced reward functions for this finetuning, it is critical that the biases and failure modes of publicly available reward models are thoroughly documented.

\section*{Acknowledgements}

We thank Deepak Ramachandran and Jason Baldridge for providing helpful comments and suggestions. Support for this research was also provided by the University
of Wisconsin-Madison Office of the Vice Chancellor for
Research and Graduate Education, NSF Award DMS-2023239, NSF/Intel Partnership on Machine Learning for Wireless Networking Program under Grant No. CNS-2003129, and  FuriosaAI. OW and YD are funded by the Center for Human Compatible Artificial Intelligence.

\bibliography{references}
\bibliographystyle{ref_bst}

\newpage
\appendix

\begin{center}{\bf {\LARGE Appendix:}}
\end{center}
\begin{center}{\bf {\Large Reinforcement Learning for Fine-tuning Text-to-Image Diffusion Models}}
\end{center}

\section{Derivations}
\subsection{Lemma~\ref{lemma_pg}}
\label{proof_pg}

\begin{proof}
    
\begin{align*}
&\mathrel{\phantom{=}}\nabla_{\theta} \mathbb{E}_{p(z)}\mathbb{E}_{p_\theta(x_0|z)}\left[-r(x_0,z)\right] = -\mathbb{E}_{p(z)}\left[\nabla_{\theta}\int p_\theta(x_0|z)r(x_0,z) dx_0\right] \\ 
&=-\mathbb{E}_{p(z)}\left[\nabla_{\theta}\int \left(\int p_\theta(x_{0:T}|z) dx_{1:T}\right)r(x_0,z) dx_0\right] \\ 
&= -\mathbb{E}_{p(z)}\left[\int \nabla_{\theta}\log p_\theta(x_{0:T}|z) \times r(x_0,z) \times p_\theta(x_{0:T}|z) \; dx_{0:T}\right] \\
&= -\mathbb{E}_{p(z)}\left[\int \nabla_{\theta}\log\left(p_T(x_T|z)\prod_{t=1}^Tp_\theta(x_{t-1}|x_t,z)\right) \times r(x_0,z) \times p_\theta(x_{0:T}|z) \; dx_{0:T}\right] \\
&=-\mathbb{E}_{p(z)}{\mathbb{E}_{p_\theta(x_{0:T}|z)}}\left[r(x_0,z) \sum_{t=1}^{T} \nabla_{\theta}\log p_{\theta}(x_{t-1}|x_{t},z)\right],
\end{align*}
where the second to last equality is from the continuous assumptions of $p_\theta(x_{0:T}|z)r(x_0,z)$ and $\nabla_{\theta}p_\theta(x_{0:T}|z)r(x_0,z)$.
\end{proof}

\subsection{Lemma~\ref{lemma_online_kl}}
\label{app: online_kl}

Similar to the proof in Theorem 1 in~\cite{song2021maximum}, from data processing inequality with the Markov kernel, given any $z$ we have

\begin{equation}
    \text{KL}(p_\theta (x_0|z)) \|p_{\text{pre}}(x_0|z)) \leq\text{KL}(p_\theta(x_{0:T}|z) \| p_{\text{pre}}(x_{0:T}|z)).
\end{equation}

Using the Markov property of $p_\theta$ and $p_{\text{pre}}$, we have
\begin{align*}
&\mathrel{\phantom{=}}\text{KL}(p_\theta(x_{0:T}|z) \| p_{\text{pre}}(x_{0:T}|z)) = \int p_\theta(x_{0:T}|z) \times \log \frac{p_\theta(x_{0:T}|z)}{p_{\text{pre}}(x_{0:T}|z)} \; d x_{0:T}\\
&= \int p_\theta(x_{0:T}|z) \log \frac{p_\theta(x_T|z) \prod_{t=1}^T p_\theta(x_{t-1} |x_t,z)}{p_{\text{pre}}(x_T|z) \prod_{t=1}^T p_{\text{pre}}(x_{t-1} | x_t,z)} dx_{0:T} \\
&= \int p_\theta(x_{0:T}|z) \left(\log\frac{p_\theta(x_T|z)}{p_{\text{pre}}(x_T|z)} + \sum_{t=1}^T \log \frac{p_\theta(x_{t-1} | x_t,z)}{p_{\text{pre}}(x_{t-1} | x_t,z)}\right) d x_{0:T} \\
&= \mathbb{E}_{p_\theta(x_{0:T}|z)}\left[\sum_{t=1}^T\log \frac{p_\theta(x_{t-1}|x_t,z)}{p_{\text{pre}}(x_{t-1}|x_t,z)}\right] = \sum_{t=1}^{T}\mathbb{E}_{p_\theta(x_{t:T}|z)}\mathbb{E}_{p_\theta(x_{0:t-1}|x_{t:T},z)}\left[\log \frac{p_\theta(x_{t-1}|x_t,z)}{p_{\text{pre}}(x_{t-1}|x_t,z)}\right] \\
& =\sum_{t=1}^{T}\mathbb{E}_{p_\theta(x_{t}|z)}\mathbb{E}_{p_\theta(x_{0:t-1}|x_{t},z)}\left[\log \frac{p_\theta(x_{t-1}|x_t,z)}{p_{\text{pre}}(x_{t-1}|x_t,z)}\right] = \sum_{t=1}^{T}\mathbb{E}_{p_\theta(x_{t}|z)}\mathbb{E}_{p_\theta(x_{t-1}|x_{t},z)}\left[\log \frac{p_\theta(x_{t-1}|x_t,z)}{p_{\text{pre}}(x_{t-1}|x_t,z)}\right] \\
&= \sum_{t=1}^{T}\mathbb{E}_{p_\theta(x_{t}|z)}\big[\text{KL}\big(p_\theta(x_{t-1}|x_t,z) \| p_{\text{pre}}(x_{t-1}|x_t, z)\big)\big].
\end{align*}

Taking the expectation of $p(z)$ on both sides, we have

\begin{equation}
\mathbb{E}_{p(z)}[\text{KL}(p_\theta (x_0|z)) \|p_{\text{pre}}(x_0|z))] \leq \mathbb{E}_{p(z)}\left[\sum_{t=1}^{T}\mathbb{E}_{p_\theta(x_{t}|z)}\big[\text{KL}\big(p_\theta(x_{t-1}|x_t,z) \| p_{\text{pre}}(x_{t-1}|x_t, z)\big)\big]\right].
\end{equation}

\subsection{The gradient of objective Eq.~(\ref{eq:online})}
\label{app: partial_gradient}
From Lemma~\ref{lemma_online_kl}, for online fine-tuning, we need to regularize $\sum_{t=1}^T\mathbb{E}_{p_\theta(x_t)}\big[\text{KL}\big(p_\theta(x_{t-1}|x_t,z)\| p_{\text{pre}}(x_{t-1}|x_t, z)\big)\big]$.
By the product rule, we have
\begin{equation}
\begin{aligned}
\label{partial_grad}
&\nabla_\theta\sum_{t=1}^T\mathbb{E}_{p_\theta(x_t)}\big[\text{KL}\big(p_\theta(x_{t-1}|x_t,z) \| p_{\text{pre}}(x_{t-1}|x_t, z)\big)\big] \\
&\quad= \sum_{t=1}^T\mathbb{E}_{p_\theta(x_t)}\big[\nabla_\theta \text{KL}\big(p_\theta(x_{t-1}|x_t,z) \| p_{\text{pre}}(x_{t-1}|x_t, z)\big)\big] \\ 
&\quad +\sum_{t=1}^T\mathbb{E}_{p_\theta(x_t)}\big[\sum_{t'>t}^T \nabla_\theta \log p_\theta(x_{t'-1}|x_{t'}, z) \cdot \text{KL}\big(p_\theta(x_{t-1}|x_t,z) \| p_{\text{pre}}(x_{t-1}|x_t, z)\big)\big], 
\end{aligned}   
\end{equation}
which treats the sum of conditional KL-divergences along the future trajectory as a scalar reward at each step. However, computing these sums is more inefficient than just the first term in Eq.~(\ref{partial_grad}). Empirically, we find that regularizing only the first term in Eq.~(\ref{partial_grad})  already works well, 
so we adopt this approach for simplicity.

\subsection{Lemma~\ref{lemma_sup_kl}}
\label{proof_sup_kl}
Similar to derivation in~\cite{ho2020denoising}, and notice that $q$ is not dependent on $z$ (i.e., $q(\dotsb|z) = q(\dotsb)$), we have

\begin{equation}
\begin{aligned}
&\mathrel{\phantom{=}}\mathbb{E}_{p(z)}\mathbb{E}_{p_{\text{pre}}(x_0|z)}[-(r(x_0,z)+\gamma)\log{p_\theta(x_0|z)}] \\
&\leq\mathbb{E}_{p(z)}\mathbb{E}_{p_\text{pre}(x_0|z)}\mathbb{E}_{q(x_{1:T}|x_0, z)}\left[-(r(x_0,z)+\gamma)\log\frac{p_\theta(x_{0:T}|z)}{q(x_{1:T}|x_0, z)}\right]
\\
&=
\mathbb{E}_{p(z)}\mathbb{E}_{p_\text{pre}(x_0|z)}\mathbb{E}_{q(x_{1:T}|x_0, z)}[-(r(x_0,z)+\gamma)\log \frac{p(x_T|z)}{q(x_T|x_0, z)}\\
&\mathrel{\phantom{=}}-\sum_{t>1}(r(x_0,z)+\gamma)\log 
\frac{p_\theta(x_{t-1}|x_t,z)}{q(x_{t-1}| x_t, x_0,z)}\\
&\mathrel{\phantom{=}}-(r(x_0,z)+\gamma)\log p_{\theta}(x_0|x_1, z)]\\
& = \mathbb{E}_{p(z)}\mathbb{E}_{p_\text{pre}(x_0|z)}[(r(x_0,z)+\gamma)\mathbb{E}_{q(x_T|x_0, z)}[\text{KL}(q(x_T|x_0) \|p(x_T))] \\
&\mathrel{\phantom{=}}+(r(x_0,z)+\gamma)\sum_{t>1} \mathbb{E}_{q(x_t|x_0, z)}[\text{KL}(q(x_{t-1}|x_t, x_0) \| p_\theta(x_{t-1}|x_t, z))] \\
&\mathrel{\phantom{=}}-(r(x_0,z)+\gamma)\mathbb{E}_{q(x_1|x_0, z)}[\log p_{\theta}(x_0|x_1, z)]],\\
\end{aligned}
\end{equation}
where the first inequality comes from ELBO.

Let 
$\tilde{\mu}_t(x_t, x_0) :=\frac{\sqrt{\bar{\alpha}_{t-1}}\beta_t}{1-\bar{\alpha}_t}x_0 + \frac{\sqrt{\alpha_t}(1-\bar{\alpha}_{t-1})}{1-\bar{\alpha}_t}x_t$ and $\tilde{\beta}_t = \frac{1-\bar{\alpha}_{t-1}}{1-\bar{\alpha}_t}\beta_t$, we have:
\begin{equation}
\begin{aligned}
& \mathbb{E}_{p(z)}\mathbb{E}_{p_\text{pre}(x_0|z)}[(r(x_0,z)+\gamma)\sum_{t>1} \mathbb{E}_{q(x_t|x_0, z)}[\text{KL}(q(x_{t-1}|x_t, x_0) \| p_\theta(x_{t-1}|x_t, z))] ]\\
& = \mathbb{E}_{p(z)}\mathbb{E}_{p_\text{pre}(x_0|z)}[(r(x_0,z)+\gamma)\sum_{t>1} \mathbb{E}_{q(x_t|x_0, z)}[\frac{1}{2\sigma_t^2}||\tilde{\mu}_t(x_t, x_0)-\mu_\theta(x_t,t,z)||^2]]+C,
\end{aligned}
\end{equation}
where $C$ is a constant.

Notice that $p_\theta(x_0|z, x_1)$ is modeled as a discrete decoder as in~\cite{ho2020denoising} and $p(x_T|z)$ is a fixed Gaussian;
neither is trainable. 

As a result, we have 
\begin{equation}
\begin{aligned}
&\mathrel{\phantom{=}}\mathbb{E}_{p(z)}\mathbb{E}_{p_{\text{pre}}(x_0|z)}[-(r(x_0,z)+\gamma)\log{p_\theta(x_0|z)}] \\
&\leq \mathbb{E}_{p(z)}\mathbb{E}_{p_\text{pre}(x_0|z)}[(r(x_0,z)+\gamma)\sum_{t>1} \mathbb{E}_{q(x_t|x_0, z)}[\frac{1}{2\sigma_t^2}||\tilde{\mu}_t(x_t, x_0)-\mu_\theta(x_t,t, z)||^2]]+ C_1
\end{aligned}
\end{equation}


Furthermore, from triangle inequality we have 

\begin{equation}
\begin{aligned}
&\sum_{t>1} \mathbb{E}_{q(x_t|x_0, z)}[\gamma\frac{||\tilde{\mu}_t(x_t, x_0)-\mu_\theta(x_t,t,z)||^2}{2\sigma_t^2}]\\ &\leq\sum_{t>1} \mathbb{E}_{q(x_t|x_0, z)}[\gamma\frac{||\tilde{\mu}_t(x_t, x_0)-\mu_{\text{pre}}(x_t,t, z)||^2+||\mu_{\text{pre}}(x_t,t, z) - \mu_{\theta}(x_t,t, z)||^2}{2\sigma_t^2}] \\
\end{aligned}
\end{equation}

So another weaker upper bound is given by 
\begin{equation}
\begin{aligned}
&\mathrel{\phantom{=}}\mathbb{E}_{p(z)}\mathbb{E}_{p_{\text{pre}}(x_0|z)}[-(r(x_0,z)+\gamma)\log{p_\theta(x_0|z)}]\\
&\leq \mathbb{E}_{p(z)}\mathbb{E}_{p_{\text{pre}}(x_0|z)}[\sum_{t>1} \mathbb{E}_{q(x_t|x_0, z)}[\frac{r(x_0,z)||\tilde{\mu}_t(x_t, x_0)-\mu_\theta(x_t,t, z)||^2}{2\sigma_t^2}\\
&\mathrel{\phantom{=}}+\frac{\gamma(||\mu_{\text{pre}}(x_t,t, z) - \mu_{\theta}(x_t,t, z)||^2)}{2\sigma_t^2}]]+C_2.\\
\end{aligned}   
\end{equation}

Notice that $p_\theta(x_0|z, x_1)$ is modeled as a discrete decoder as in~\cite{ho2020denoising} and $p(x_T|z)$ is a fixed Gaussian;
neither is trainable. Then the proof is complete.


\subsection{Value function learning}
\label{app: value learning}
Similar to~\cite{schulman2015high}, we can also apply the variance reduction trick by subtracting a baseline function $V(x_t,z, \theta)$:

\begin{equation}
\begin{aligned}
&\mathrel{\phantom{=}}\mathbb{E}_{p(z)}\mathbb{E}_{ p_\theta(x_{0:T}|z)}\left[- r(x_0,z) \sum_{t=1}^{T} \nabla_{\theta}\log p_{\theta}(x_{t-1}|x_{t},z)\right]\\
&= \mathbb{E}_{p(z)}\mathbb{E}_{ p_\theta(x_{0:T}|z)}\left[-\sum_{t=1}^{T} \left(r(x_0,z)-V(x_t, z,\theta)\right) \nabla_{\theta}\log p_{\theta}(x_{t-1}|x_{t},z)\right], 
\end{aligned}
\end{equation}
where
\begin{equation}
V(x_t,z,\theta):=\mathbb{E}_{p_{\theta}(x_{0:t})}[r(x_0,z)|x_t].
\end{equation}
So we can learn a value function estimator by minimizing the objective function below for each $t$:
\begin{equation}
\mathbb{E}_{p_{\theta}(x_{0:t})}\left[\left(r(x_0,z) - \hat{V}(x_t,\theta,z)\right)^2|x_t\right].
\end{equation}

Since subtracting $V(x_t,\theta, z)$ will minimize the variance of the gradient estimation, it is expected that such a trick can improve policy gradient training. In our experiments, we find that it can also improve the final reward: For example, for the prompt "A dog on the moon", adding variance reduction can improve ImageReward from 0.86 to 1.51, and also slightly improve the aesthetic score from 5.57 to 5.60. The generated images with and without value learning are shown in Figure~\ref{fig:w/wo}. We also find similar improvements in multi-prompt training (see learning curves with and without value learning in Figure~\ref{fig: curves}).

\begin{figure}[h] 
\centering
\includegraphics[width=.99\textwidth]{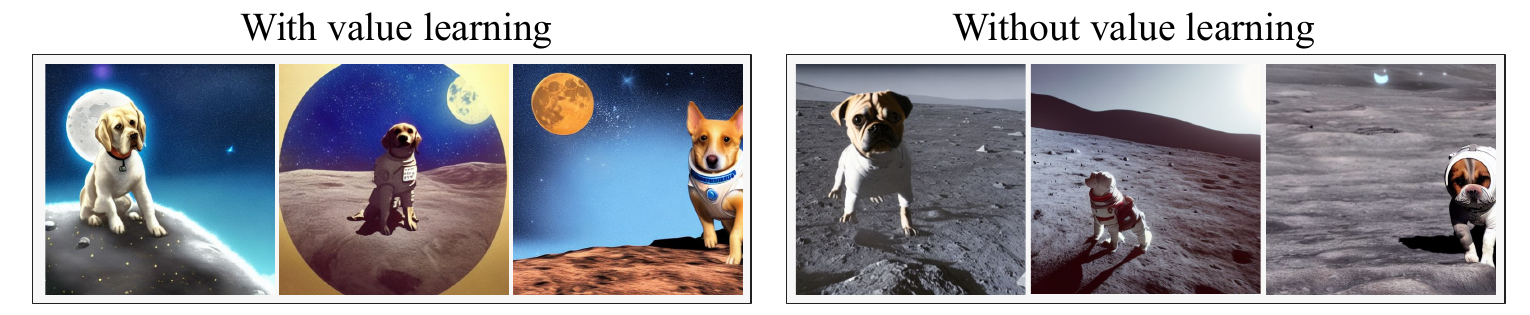}
\caption{Text prompt: "A dog on the moon", generated from the same random seeds from the model trained with and without value learning, respectively.}
\label{fig:w/wo}   
\end{figure}

\subsection{Importance sampling and ratio clipping}
\label{app:is}
In order to reuse old trajectory samples, we can apply the important sampling trick:
\begin{equation}
\begin{aligned}
&\mathrel{\phantom{=}}\mathbb{E}_{p(z)}\mathbb{E}_{ p_\theta(x_{0:T}|z)}\left[-\sum_{t=1}^{T} \left(r(x_0,z)-V(x_t, z,\theta)\right) \nabla_{\theta}\log p_{\theta}(x_{t-1}|x_{t},z)\right]\\
&= \mathbb{E}_{p(z)}\mathbb{E}_{ p_{\theta_{\text{old}}}(x_{0:T}|z)}\left[-\sum_{t=1}^{T} \left(r(x_0,z)-V(x_t, z,\theta)\right) \frac{p_{\theta}(x_{t-1}|x_{t},z)}{p_{\theta_{\text{old}}}(x_{t-1}|x_{t},z)}\nabla_{\theta}\log p_{\theta}(x_{t-1}|x_{t},z)\right]\\
&=\mathbb{E}_{p(z)}\mathbb{E}_{ p_{\theta_{\text{old}}}(x_{0:T}|z)}\left[-\sum_{t=1}^{T} \left(r(x_0,z)-V(x_t, z,\theta)\right) \nabla_{\theta}\frac{p_{\theta}(x_{t-1}|x_{t},z)}{p_{\theta_{\text{old}}}(x_{t-1}|x_{t},z)}\right].
\end{aligned}
\end{equation}
In order to constrain $p_{\theta}$ to be close to $p_{{\theta}_{\text{old}}}$, we can use the clipped gradient for policy gradient update similar to PPO~\cite{schulman2017proximal}:
\begin{equation}
\mathbb{E}_{p(z)}\mathbb{E}_{ p_{\theta_{\text{old}}}(x_{0:T}|z)}\left[-\sum_{t=1}^{T} \left(r(x_0,z)-V(x_t, z,\theta)\right) \nabla_{\theta}\text{clip}\left(\frac{p_{\theta}(x_{t-1}|x_{t},z)}{p_{\theta_{\text{old}}}(x_{t-1}|x_{t},z)},1+\epsilon, 1-\epsilon\right)\right],
\end{equation}
where $\epsilon$ is the clip hyperparameter.
\section{Experimental Details} \label{app:exp}
\vspace{-3mm}
\begin{wrapfigure}{r}{0.48\textwidth}
\begin{center}
\includegraphics[width=0.48\textwidth]{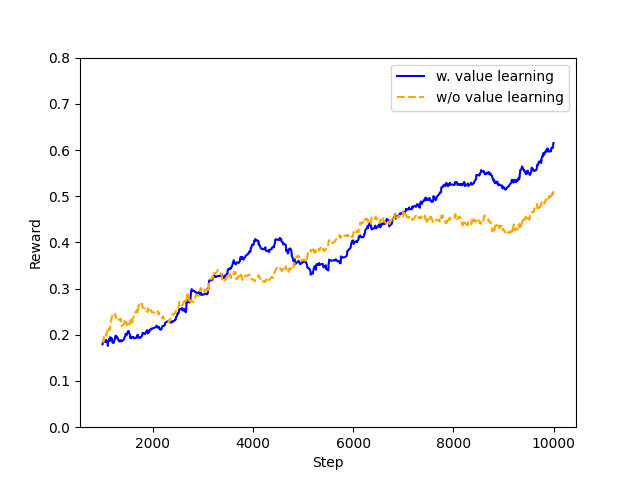}
\caption{Learning curves with and without value learning, trained on the Drawbench prompt set: Adding value learning could result in higher reward using less time.}
\label{fig: curves}
\end{center}
\end{wrapfigure}

\paragraph{Online RL training.}
For hyper-parameters of online RL training used in Section~\ref{sec:main_exp}., we use $\alpha=10, \beta = 0.01$, learning rate = $10^{-5}$ and keep other default hyper-parameters in AdamW, sampling batch size $m=10$. For policy gradient training, to perform more gradient steps without the need of re-sampling, we perform $5$ gradient steps per sampling step, with gradient norm clipped to be smaller than $0.1$, and use importance sampling to handle off-policy samples, and use batch size $n = 32$ in each gradient step. We train the models till generating $20000$ online samples, which matches the number of samples in supervised fine-tuning and results in 10K gradient steps in total.
For stable training, we freeze the weights in batch norm to be exactly the same as the pretrained diffusion model. For importance sampling, we apply the clip hyperparameter $\epsilon=10^{-4}$. 

\vspace{-2mm}

\paragraph{Supervised training.}

For hyper-parameters of supervised training, we use $\gamma = 2.0$ as the default option in Section~\ref{sec:main_exp},  which is chosen from $\gamma \in \{0.1,1.0,2.0,5.0\}$.
We use learning rate $=2\times 10^{-5}$ and keep other default hyper-parameters in AdamW, which was chosen from $\{5\times 10^{-6}, 1\times 10^{-5}, 2\times 10^{-5}, 5\times10^{-5}\}.$ 
We use batch size $n=128$ and $M = 20000$ such that both algorithms use the same number of samples, and train the SFT model for 8K gradient steps.
Notice that Lemma~\ref{lemma_sup_kl} requires $r+\gamma$ to be non-negative, we apply a filter of reward such that if $r+\gamma <0$, we just set it to 0. In practice, we also observe that without this filtering, supervised fine-tuning could fail during training, which indicates that our assumption is indeed necessary.

\section{Investigation on ImageReward} \label{app:imagereward}

\textcolor{black}{We investigate the quality of ImageReward~\cite{imagereward} by evaluating its prediction of the assessments of human labelers. Similar to \citet{lee2023aligning}, we check whether ImageReward generates a higher score for images preferred by humans. Our evaluation encompasses four text prompt categories: color, count, location, and composition. To generate prompts, we combine words or phrases from each category with various objects. These prompts are then used to generate corresponding images using a pre-trained text-to-image model. We collect binary feedback from human labelers on the image-text dataset and construct comparison pairs based on this feedback. Specifically, we utilize 804, 691, 1154 and 13228 comparison pairs obtained from 38, 29, 41 and 295 prompts for color, location, count, and composition, respectively.\footnote{Full list of text prompts is available at \href{https://docs.google.com/spreadsheets/d/1p3wSlu6KkHIW9piYh3tEdmoQLOXhdhIhVjG7uXcATyM/edit?usp=sharing}{[link]}.} Additionally, for evaluation on complex text prompts from human users, we also utilize the test set from ImageReward, which consists of 6399 comparison pairs from 466 prompts.\footnote{\href{https://github.com/THUDM/ImageReward/blob/main/data/test.json}{https://github.com/THUDM/ImageReward/blob/main/data/test.json}} Table~\ref{table:tfa} compares the accuracy of ImageReward and baselines like CLIP~\cite{clip} and BLIP~\cite{blip} scores, which justifies the choice of using ImageReward for fine-tuning text-to-image models.}

\begin{table}[h]
\begin{center}
\small
\begin{tabular}{l|ccccc}
\toprule
& Color & Count & Location & Composition & Complex\\
\midrule
CLIP & 82.46 & 66.11 & $\mathbf{80.60}$ & 70.98 & 54.83 \\
BLIP & 77.36 & 61.61 & 78.14 & 70.98 & 57.79 \\
ImageReward & $\mathbf{86.06}$ & $\mathbf{73.65}$ & 79.73 & $\mathbf{79.65}$ & $\mathbf{65.13}$ \\
\bottomrule
\end{tabular}
\end{center}
\caption{\textcolor{black}{Accuracy (\%) of CLIP score, BLIP score and ImageReward when predicting the assessments of human labelers.}}
\label{table:tfa}
\end{table}

\section{A Comprehensive Discussion of Related Work}
\label{app:sup_prior_work}



We summarize several similarities and differences between our work and a concurrent work~\cite{Black23train} as follows:
\textcolor{black}{
\begin{itemize} [leftmargin=7mm]
    \item Both \citet{Black23train} and our work explored online RL fine-tuning for improving text-to-image diffusion models. However, in our work, we provide theoretical insights for optimizing the reward with policy gradient methods and provide conditions for the equivalence to hold.
    \item \citet{Black23train} demonstrated that RL fine-tuning can outperform supervised fine-tuning with reward-weighted loss~\cite{lee2023aligning} in optimizing the reward, which aligns with our own observations. 
    \item In our work, we do not only focus on reward optimization: inspired by failure cases (e.g., over-saturated or non--photorealistic images) in supervised fine-tuning~\cite{lee2023aligning}, we aim at finding an RL solution with KL regularization to solve the problem.
    \item Unique to our work, we systematically analyze KL regularization for both supervised and online fine-tuning with theoretical justifications. We show that KL regularization would be more effective in the online RL fine-tuning rather than the supervised fine-tuning. By adopting online KL regularization, our algorithm successfully achieves high rewards and maintains image quality without over-optimization issue. 
\end{itemize}}

\section{More Experimental Results}
\label{app:multiprompt}

\subsection{KL-D vs KL-O: Ablation Study on Supervised KL Regularization}\label{sec:kldvsklo}

We also present the effects of both KL-D and KL-O with coefficient $\gamma \in \{0.1,1.0,2.0,5.0\}$ in Figure~\ref{fig: im_as_sup_ablation_rabbit} and Figure~\ref{fig: sup_kl_ablation}.
We observe the followings:
\begin{itemize} [leftmargin=7mm]
    \item \textcolor{black}{KL-D (eq.~(\ref{tighter_bound})) can slightly increase the aesthetic score. However, even with a large value of $\gamma$, it does not yield a significant improvement in visual quality. Moreover, it tends to result in lower ImageReward scores overall.}

    \item \textcolor{black}{In Figure~\ref{fig: sup_kl_ablation}, it is evident that KL-O is more noticeably effective in addressing the failure modes in supervised fine-tuning (SFT) compared to KL-D, especially when $\gamma$ is relatively large. However, as a tradeoff, KL-O significantly reduces the ImageReward scores in that case.}

    \item In general, achieving both high ImageReward and aesthetic scores simultaneously is challenging for supervised fine-tuning (SFT), unlike RL fine-tuning (blue bar in Figure~\ref{fig: im_as_sup_ablation_rabbit}).  
\end{itemize}

We also present more qualitative examples from different choices of KL regularization in Appendix~\ref{app: kl_ablation_sft_more}.

\begin{figure*} [h!] \centering
\vspace{-0.1in}
\subfigure[ImageReward score]
{\includegraphics[width=0.4\textwidth]{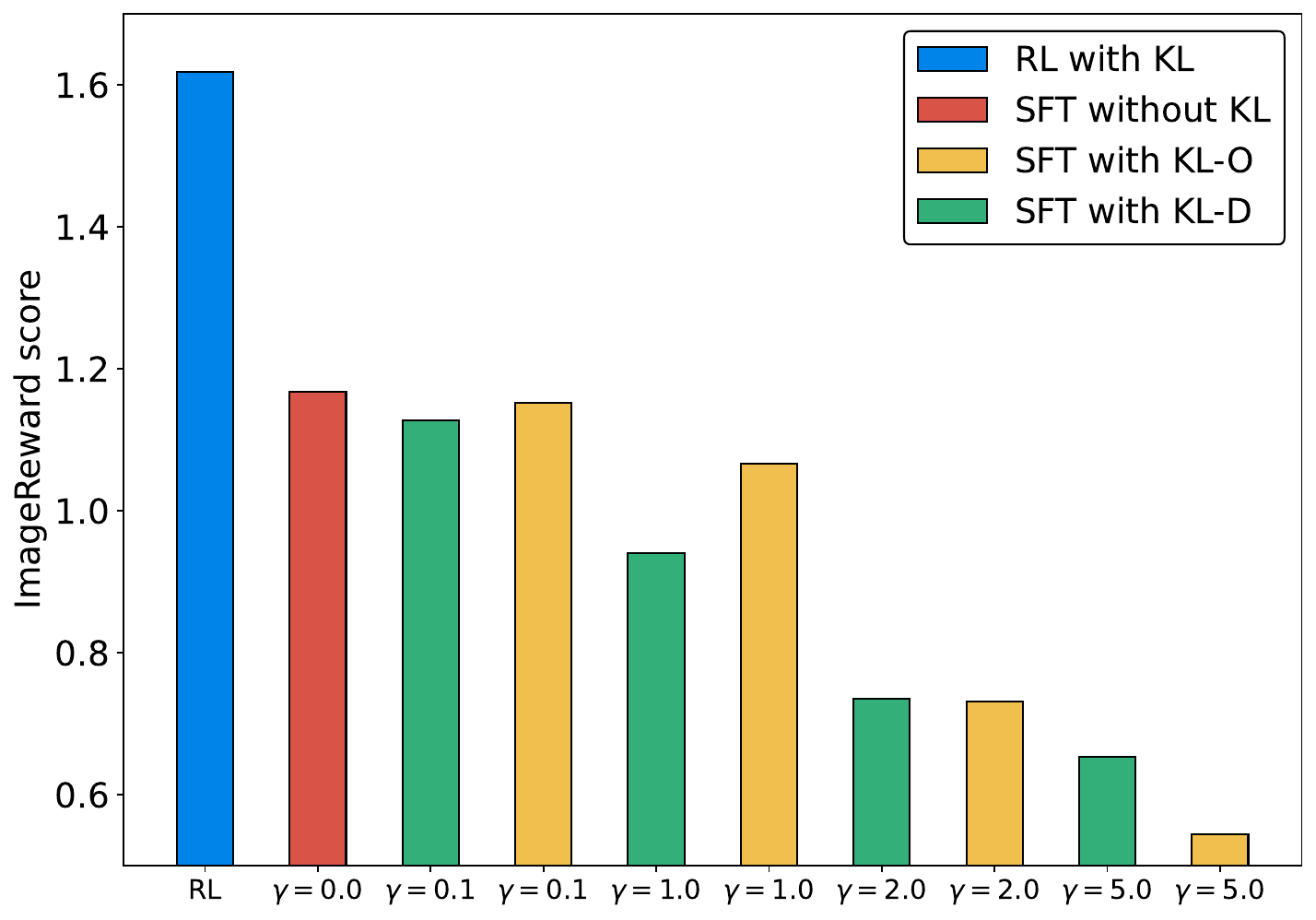}} 
\subfigure[Aesthetic score]
{\includegraphics[width=0.4\textwidth]{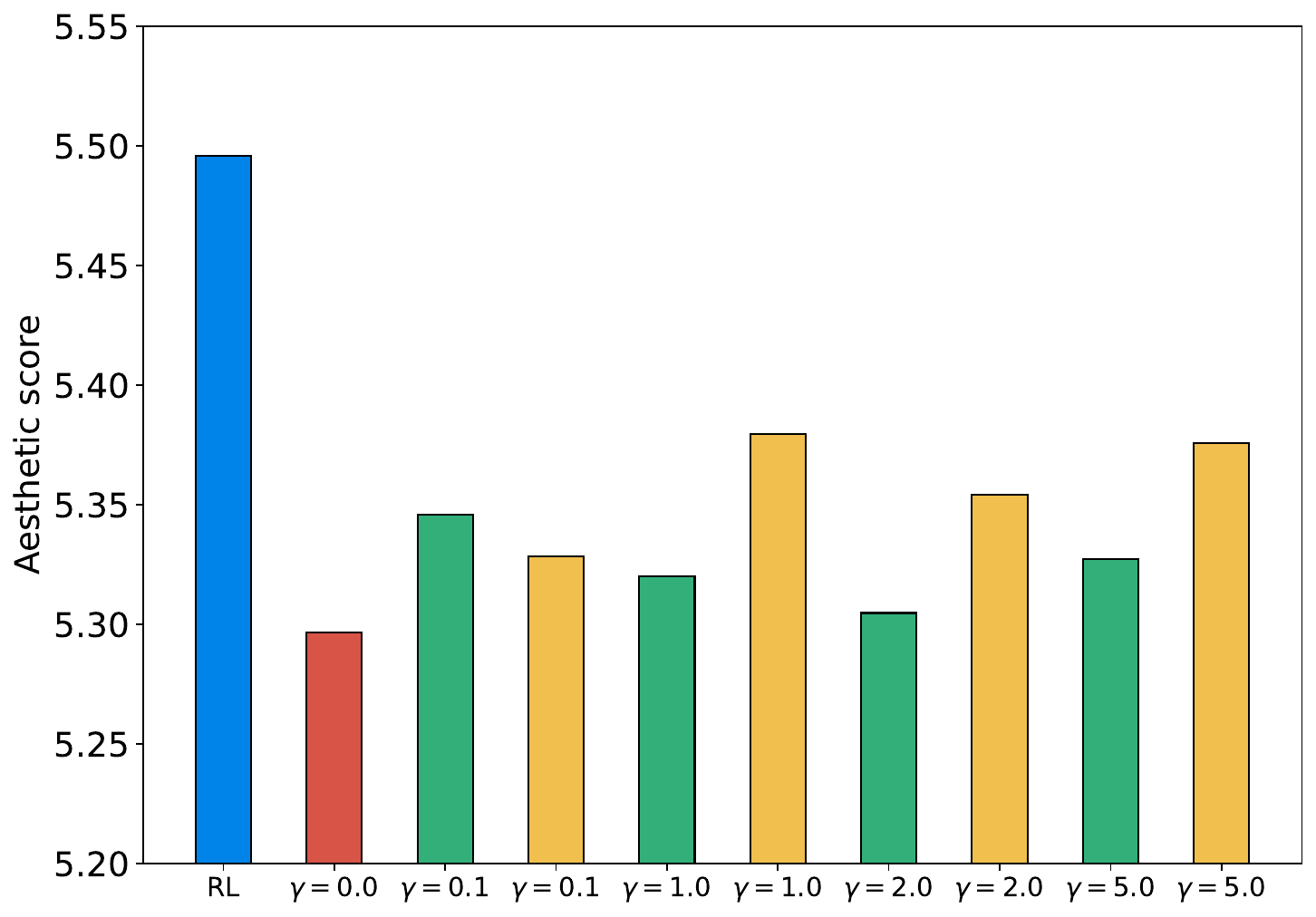}}
\caption{(a) ImageReward scores and (b) Aesthetic scores of supervised fine-tuned (SFT) models with different choices of regularization term and coefficient ($\gamma$) on text prompt "A green colored rabbit". Each score is averaged over 50 samples from each model. KL-O and KL-D refer to eq.~(\ref{weaker_bound}) and eq.~(\ref{tighter_bound}), respectively.}
\vspace{-0.2in}
\label{fig: im_as_sup_ablation_rabbit}
\end{figure*}

\begin{figure} [h] \centering
\includegraphics[width=.9\textwidth]{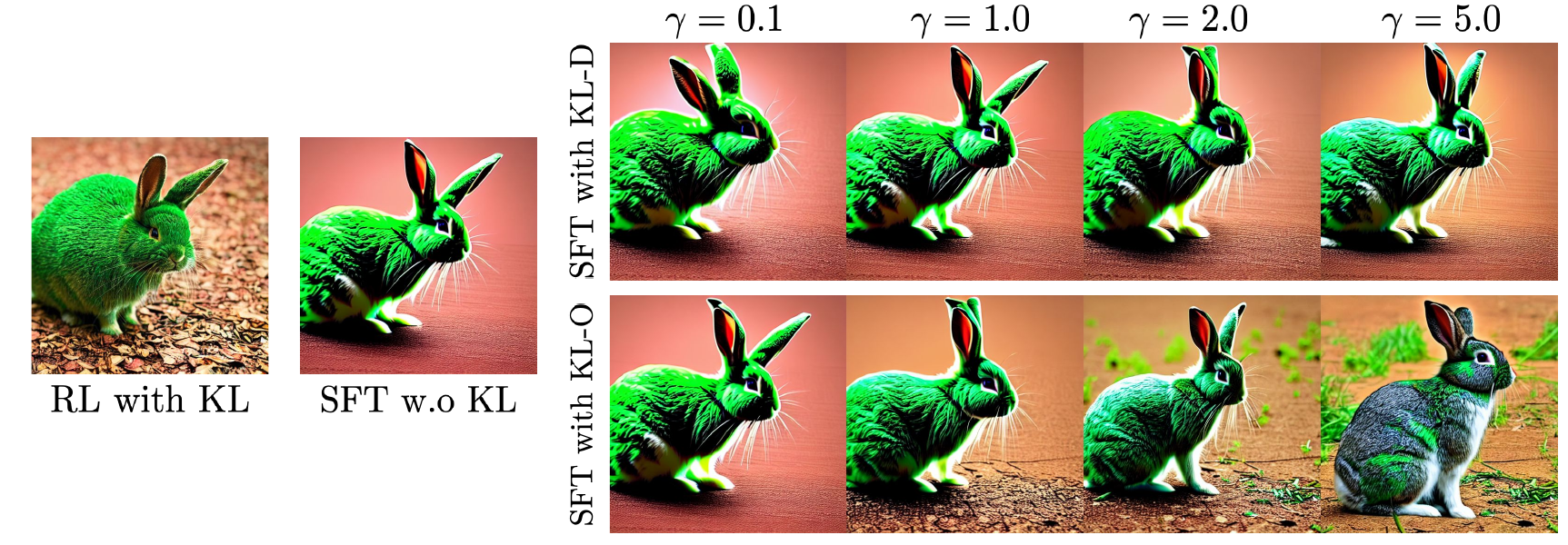}
\caption{\textbf{(Left)} Sample from the RL model with KL regularization and sample from the SFT model without KL regularization. \textbf{(Right)} Samples from supervised fine-tuned models with different choices of regularization term and coefficient ($\gamma$) on ``A green colored rabbit". As we increase $\gamma$ in the SFT case, there is a tradeoff between alignment with the prompt and image quality (i.e. oversaturation). On the other hand, the RL with KL sample is able to retain both alignment and image quality.}
\label{fig: sup_kl_ablation}
\vspace{-0.2in}
\end{figure}

\begin{figure} [h] \centering
\includegraphics[width=.9\textwidth]{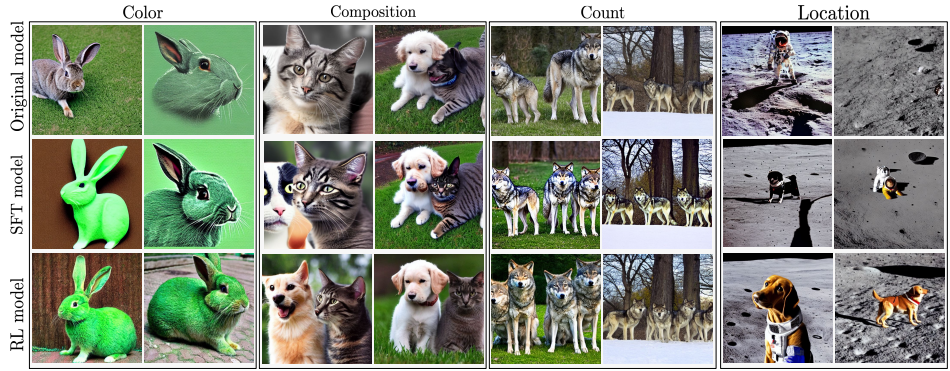}
\caption{Comparison of images generated by the original Stable Diffusion model, supervised fine-tuned (SFT) model, and RL fine-tuned model. Images in the same column are generated with the same random seed. Images from unseen text prompts: ``A green colored cat'' (color), ``A cat and a cup'' (composition), ``Four birds in the park'' (count), and ``A lion on the moon'' (location).}
\label{fig:main_fig_2}
\end{figure}

\subsection{Images from unseen text prompts}

Figure~\ref{fig:main_fig_2} shows image samples from the original model, SFT model and RL fine-tuned models on unseen text prompts.

\subsection{Multi-prompt Training}
\begin{figure}[h]
\centering
\includegraphics[width=0.8\textwidth]{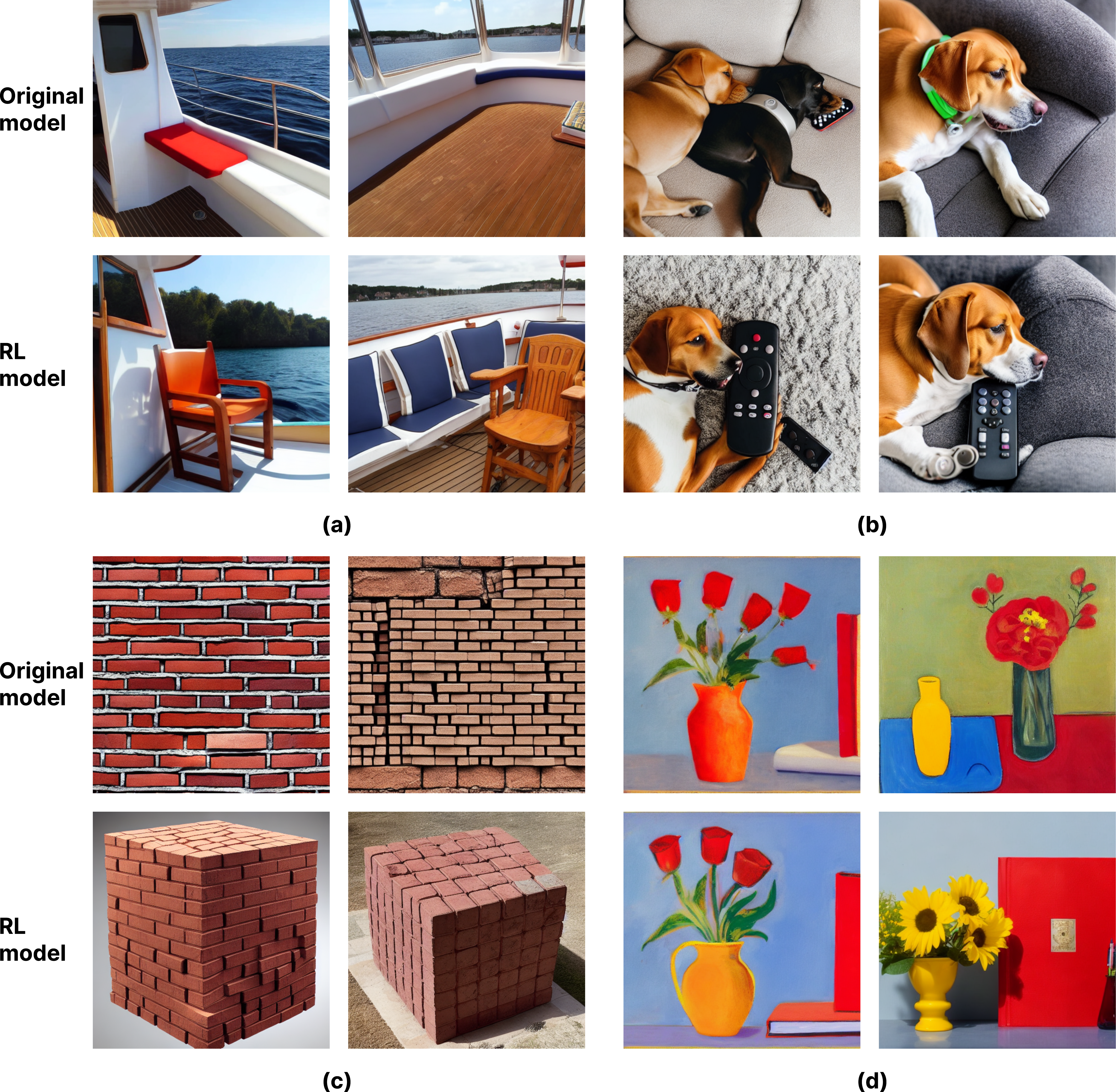}
    \caption{Sample images generated from prompts: (a) "A chair in the corner on a boat"; (b) "A dog is laying with a remote controller"; (c) "A cube made of brick"; (d) "A red book and a yellow vase", from the original model and RL model respectively. Images in the same column are generated with the same random seed. 
    }
    \label{fig:drawbench-coco}
\end{figure}
Here we report more details for multi-prompt training. We still use the same setting in Appendix~\ref{app:exp}, but 1) increase the batch size per sampling step to $n=45$ for stable training; 2) use a smaller KL weight $\beta=0.001$; 3) use extra value learning for variance reduction. For value learning, we use a larger learning rate $10^{-4}$ and batch size 256.  Also, we train for a longer time for multi-prompt such that it utilizes $50000$ online samples. See sample images in Figure~\ref{fig:drawbench-coco}.

\begin{figure} [h] \centering
\includegraphics[width=.99\textwidth]{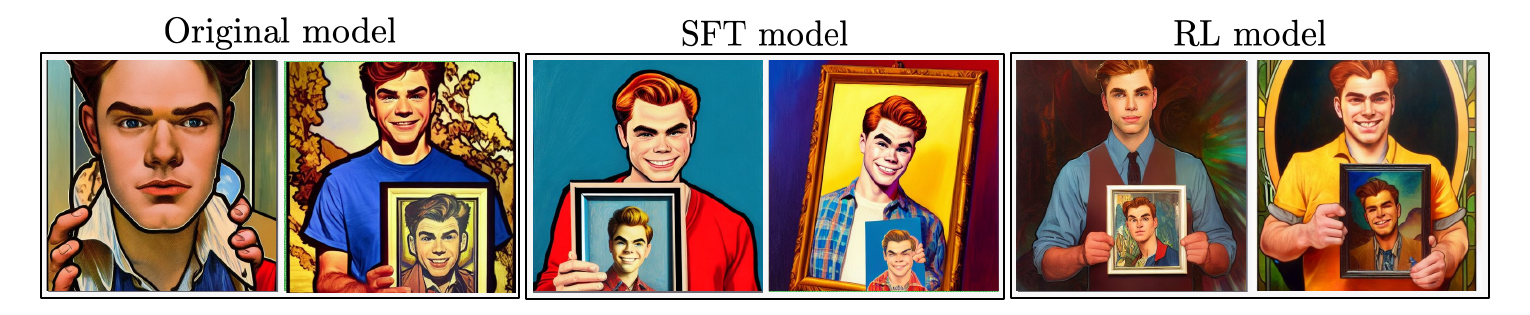}
\caption{Text prompt: "oil portrait of archie andrews holding a picture of among us, intricate, elegant, highly detailed, lighting, painting, artstation, smooth, illustration, art by greg rutowski and alphonse mucha".}
\label{fig:six}
\end{figure}

\subsection{Long and Complex Prompts}
We show that our method does not just work for short and simple prompts like "A green colored rabbit", but can also work with long and complex prompts. For example, in Fig.~\ref{fig:six} we adopt a complex prompt and compare the images generated by the original model and the supervised fine-tuned model. We can observe that online training encourages the models to generate images with a different style of painting and more fine-grained details, which is generally hard to achieve by supervised fine-tuning only (with KL-O regularization, $\gamma = 2.0$).

\subsection{More Qualitative Results from Ablation Study for KL Regularization in SFT}
\label{app: kl_ablation_sft_more}
Here we provide samples from more prompts in KL ablation for SFT: see Figure~\ref{fig: sup_kl_ablation_more_prompts}.

\begin{figure*} [h!] \centering
\subfigure[``Four wolves in the park"]
{\includegraphics[width=0.85\textwidth]{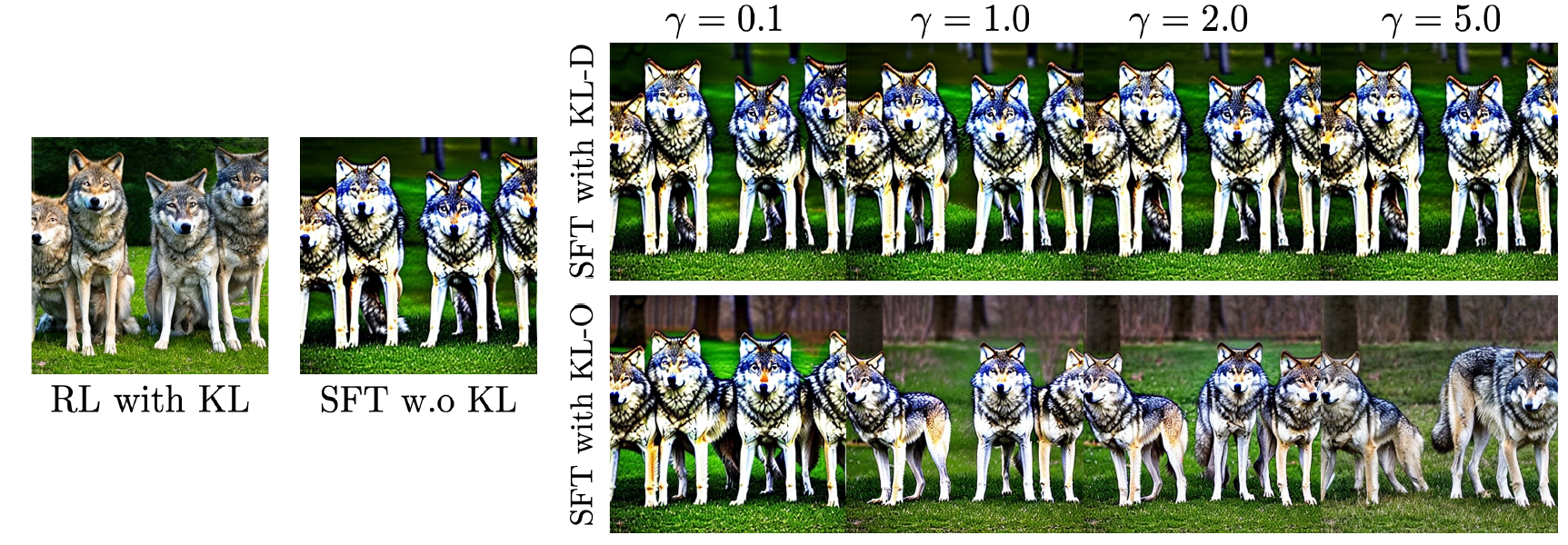}
}
\subfigure[``A cat and a dog"]
{\includegraphics[width=0.85\textwidth]{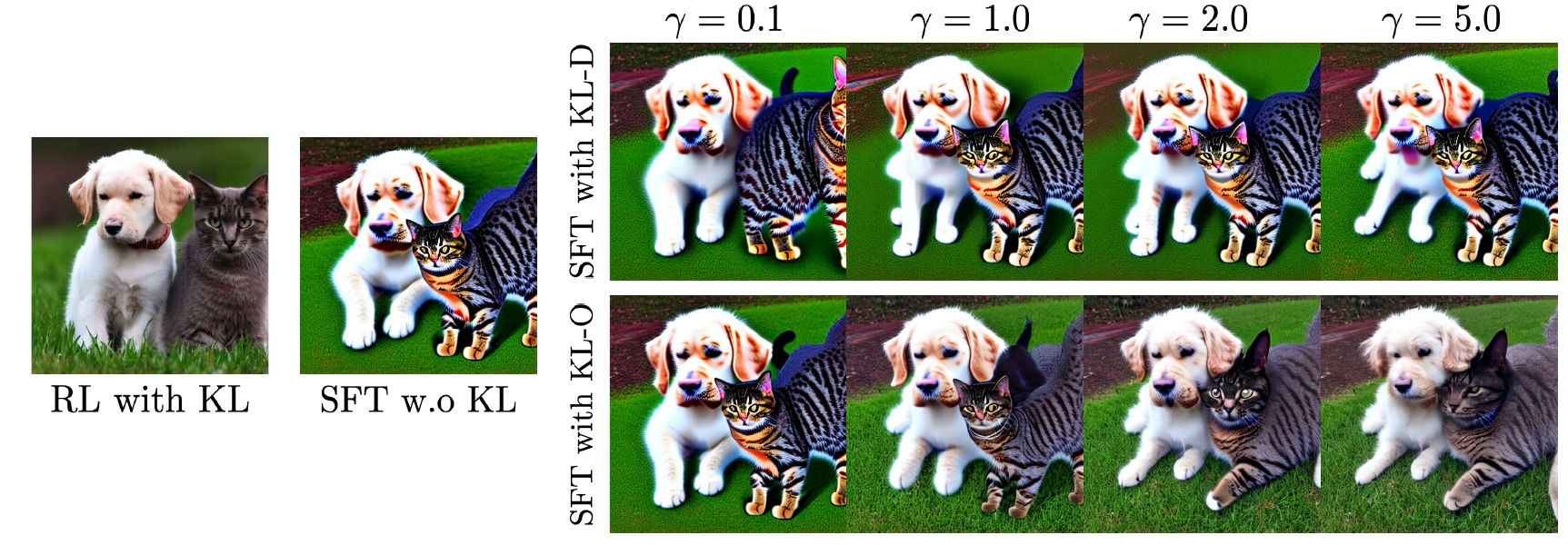}
}
\subfigure[``A dog on the moon"]
{\includegraphics[width=0.85\textwidth]{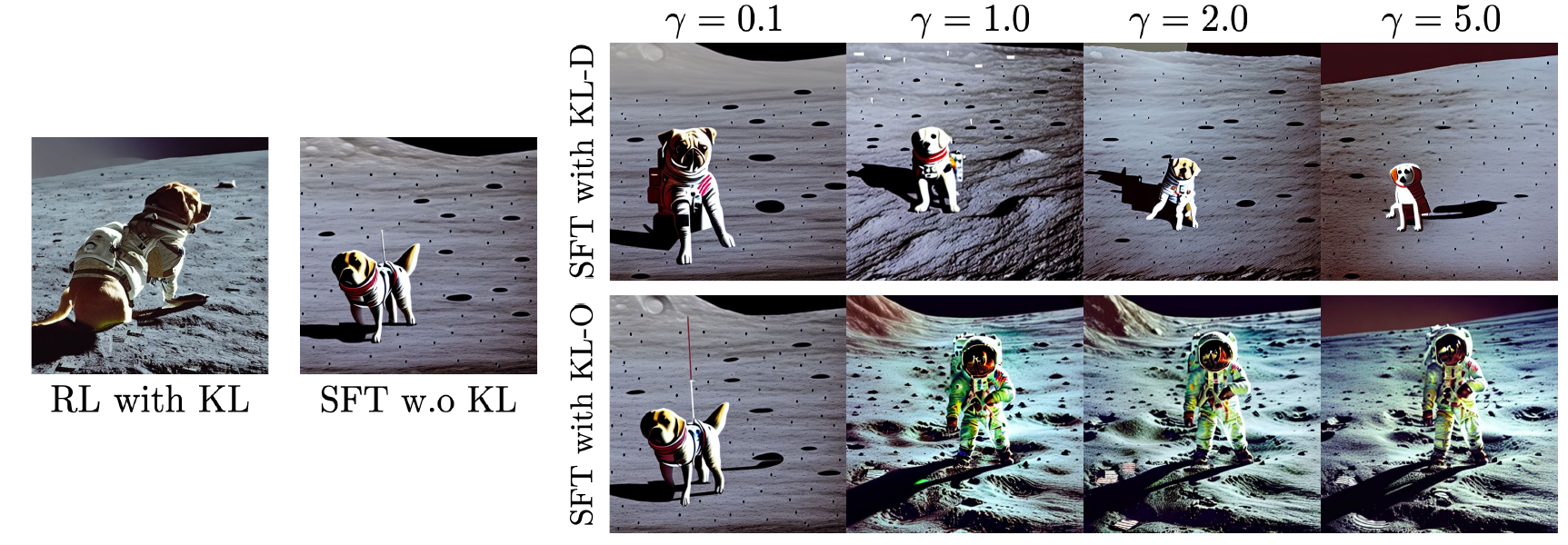}
}
\caption{Samples from supervised fine-tuned models with different KL regularization and KL coefficients.}
\label{fig: sup_kl_ablation_more_prompts}
\end{figure*}




\subsection{Qualitative Comparison} \label{app:qualitative}

Here we provide more samples from the experiments in Section~\ref{sec:main_exp} and Section~\ref{sec:kl_abl}: see Figure~\ref{fig:app_color}, Figure~\ref{fig:app_comp}, Figure~\ref{fig:app_count}, Figure~\ref{fig:app_loc}, Figure~\ref{fig:app_kl_rl} and Figure~\ref{fig:app_kl_sup}, with the same configuration as in Section~\ref{sec:main_exp} for both RL and SFT.

\begin{figure*} [h!] \centering
\subfigure[Original model]
{\includegraphics[width=0.99\textwidth]{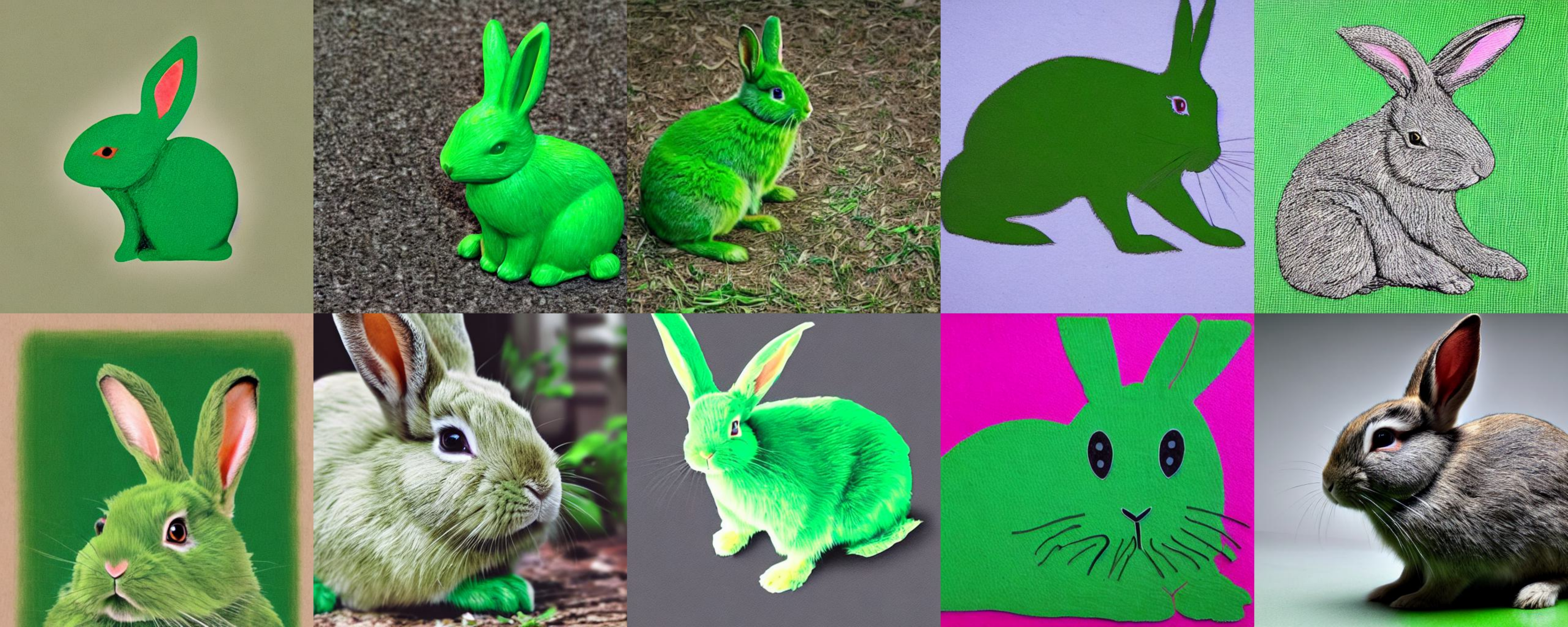}
\label{fig:app_color_0}} 
\subfigure[SFT model]
{\includegraphics[width=0.99\textwidth]{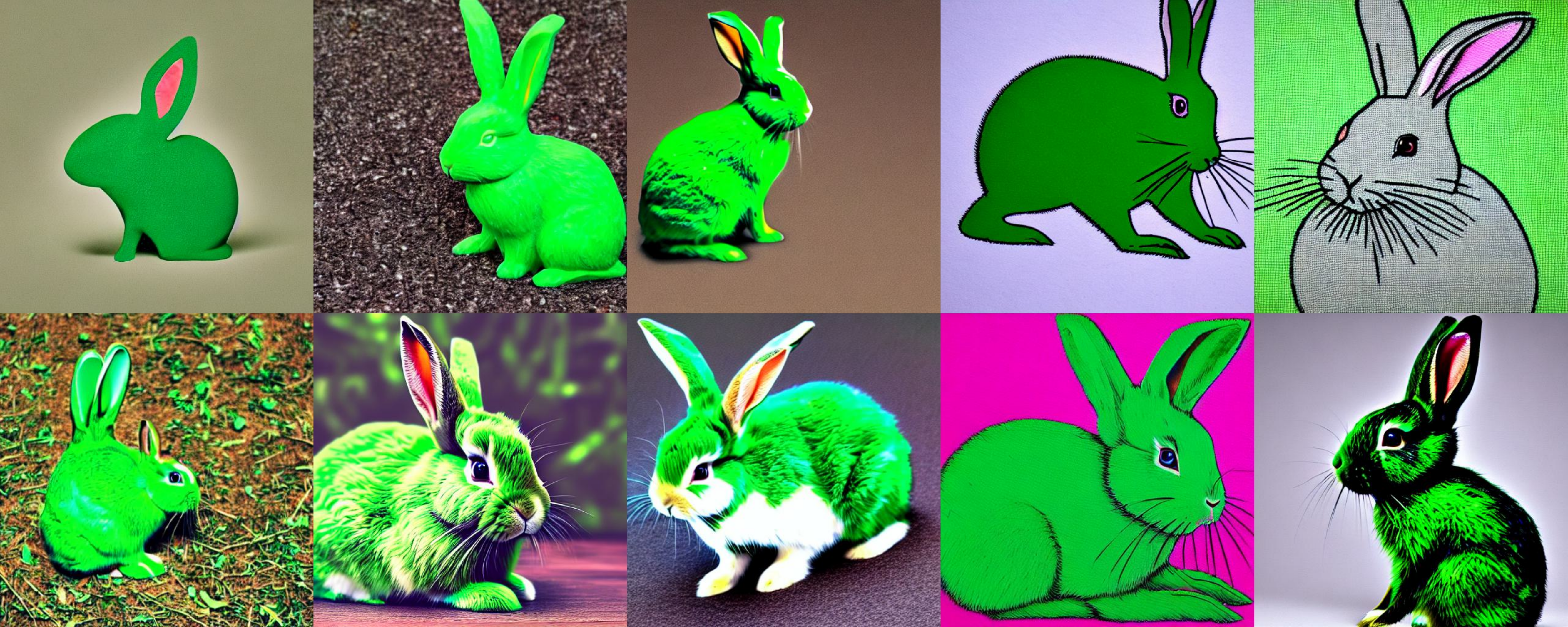}
\label{fig:app_color_1}}
\subfigure[RL model]
{\includegraphics[width=0.99\textwidth]{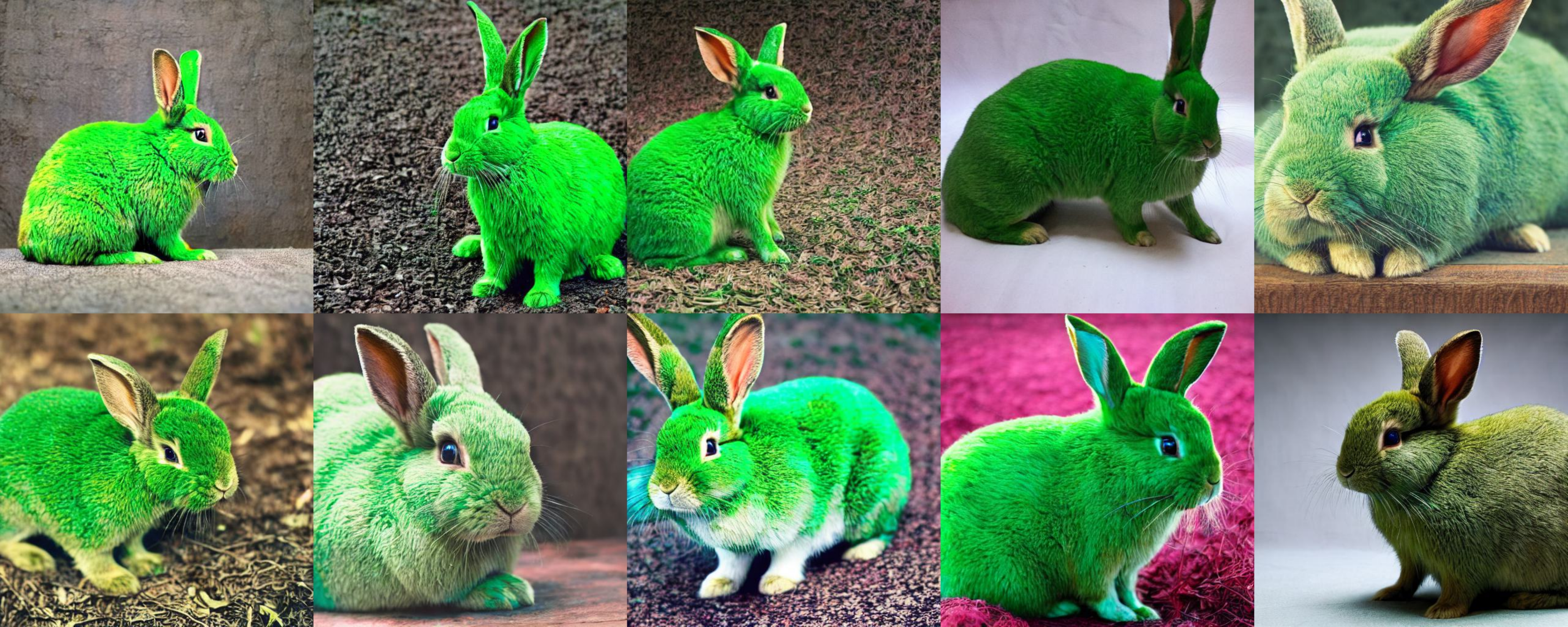}
\label{fig:app_color_2}}
\caption{Randomly generated samples from (a) the original Stable Diffusion model, (b) supervised fine-tuned (SFT) model and (c) RL fine-tuned model.}
\label{fig:app_color}
\end{figure*}

\begin{figure*} [h!] \centering
\subfigure[Original model]
{\includegraphics[width=0.99\textwidth]{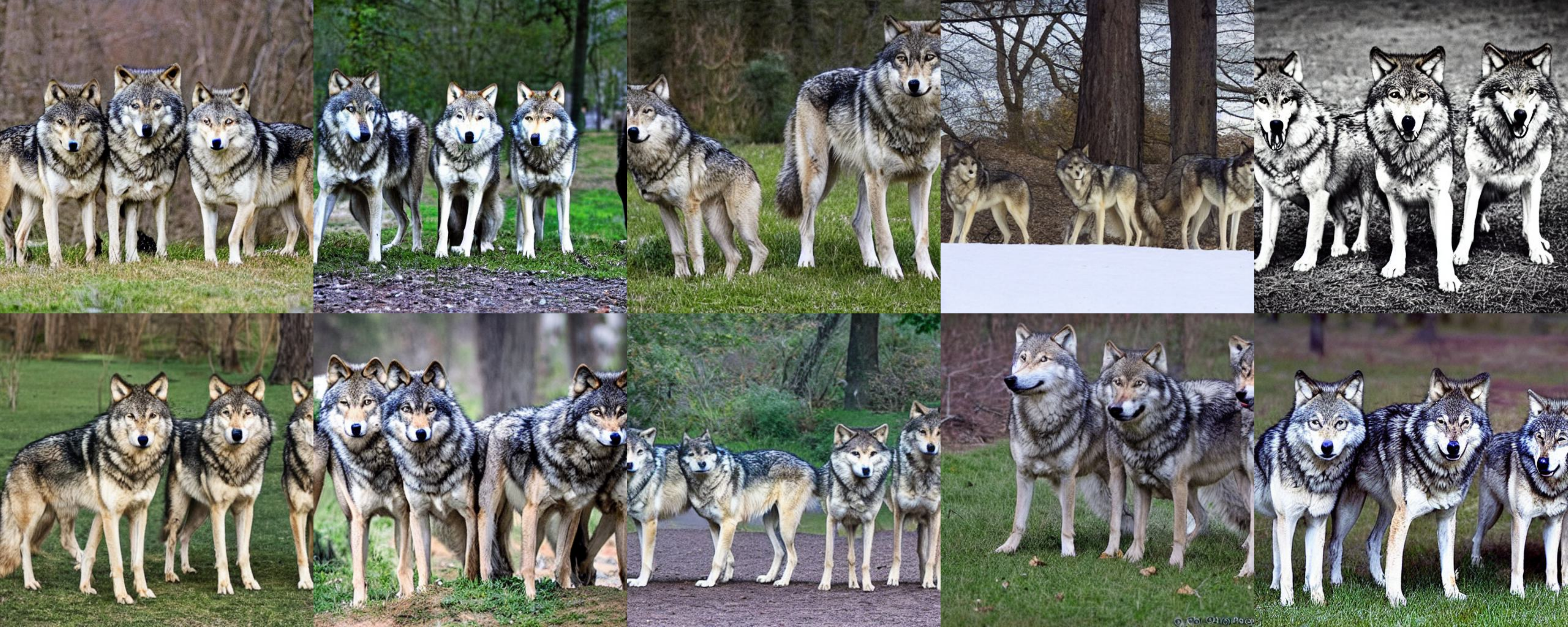}
\label{fig:app_count_0}} 
\subfigure[SFT model]
{\includegraphics[width=0.99\textwidth]{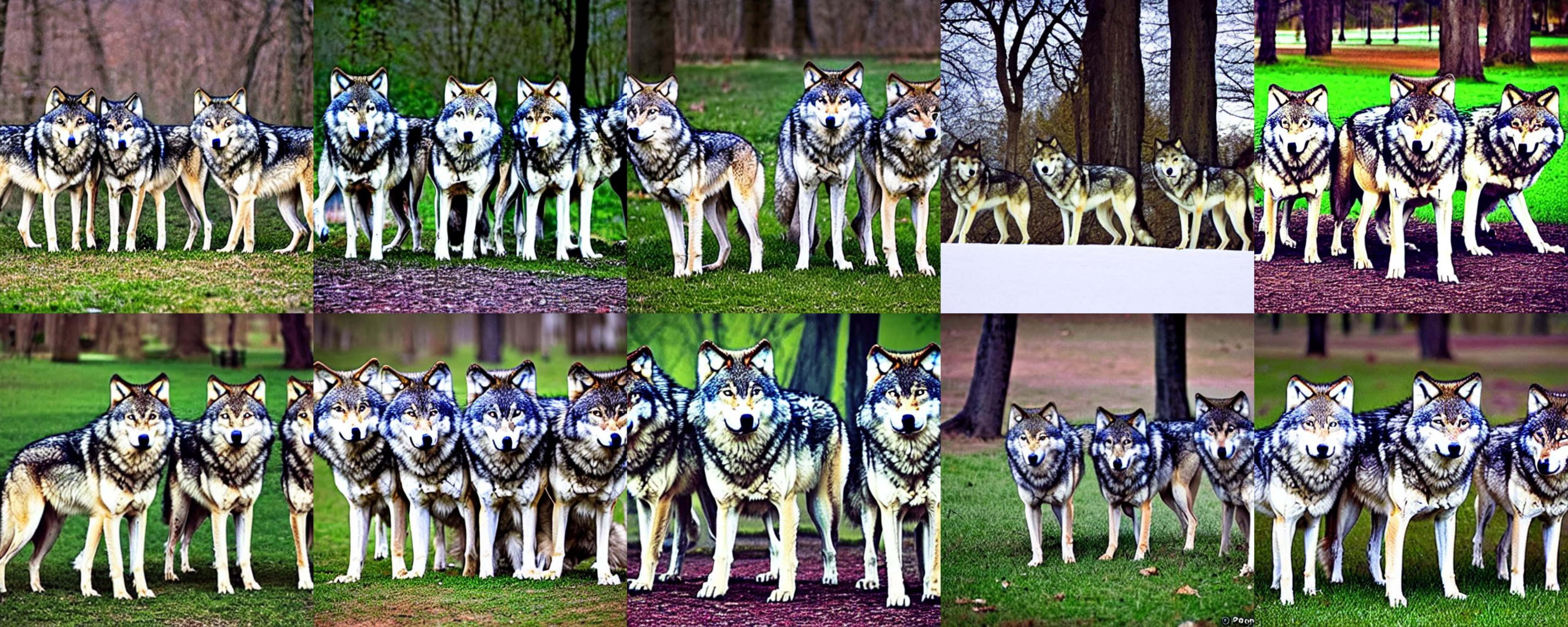}
\label{fig:app_count_1}}
\subfigure[RL model]
{\includegraphics[width=0.99\textwidth]{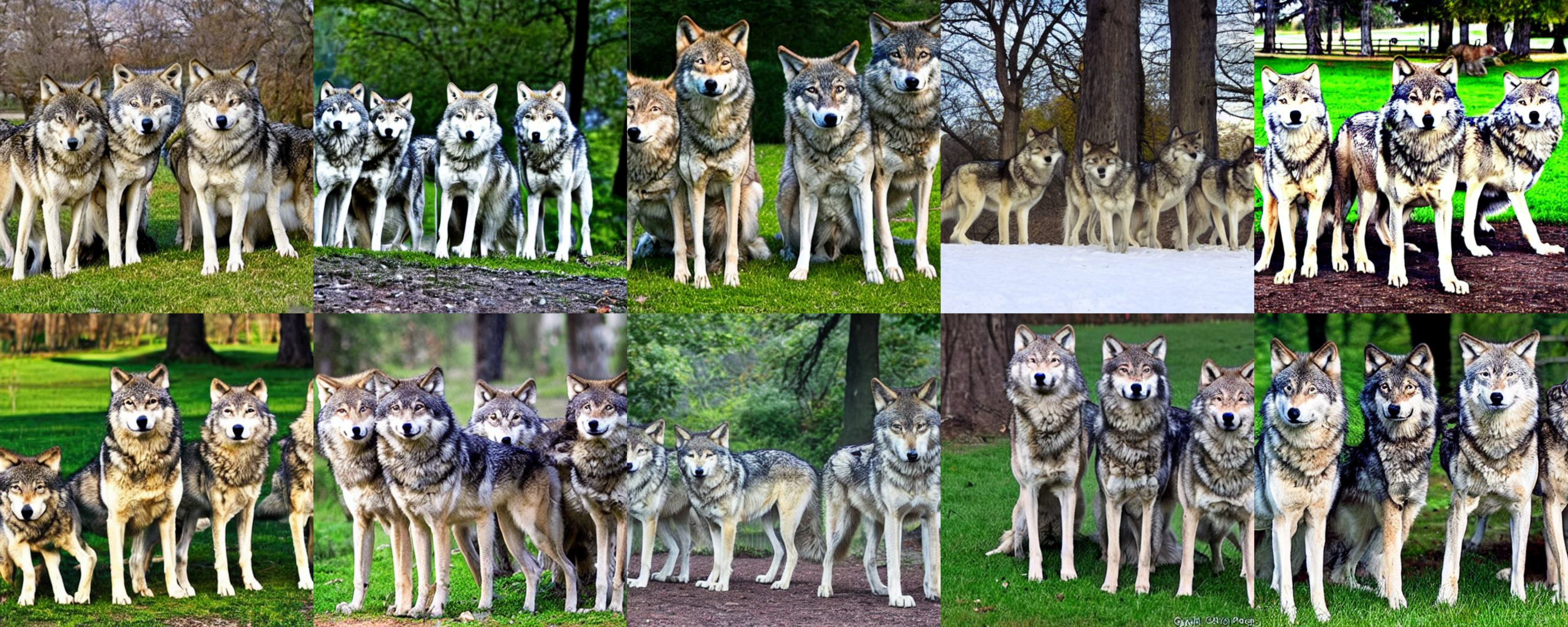}
\label{fig:app_count_2}}
\caption{Randomly generated samples from (a) the original Stable Diffusion model, (b) supervised fine-tuned (SFT) model and (c) RL fine-tuned model.}
\label{fig:app_count}
\end{figure*}

\begin{figure*} [h!] \centering
\subfigure[Original model]
{\includegraphics[width=0.99\textwidth]{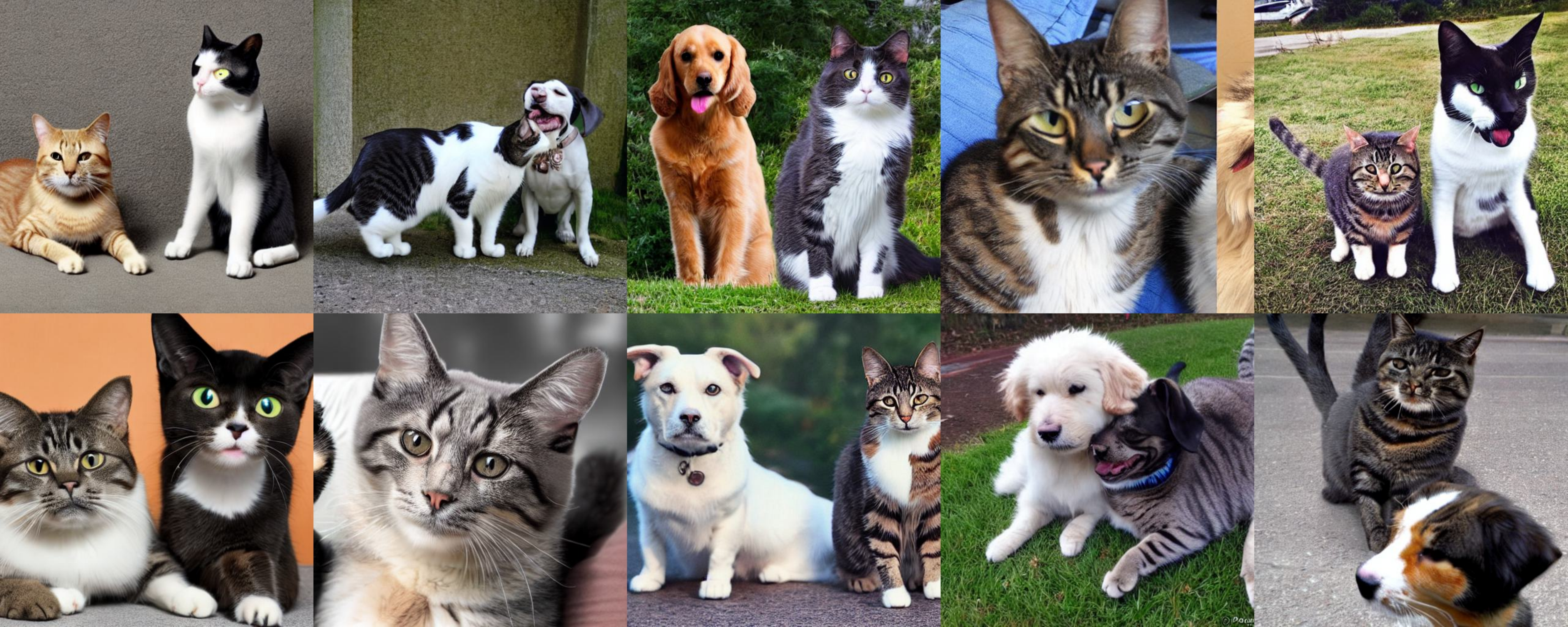}
\label{fig:app_comp_0}} 
\subfigure[SFT model]
{\includegraphics[width=0.99\textwidth]{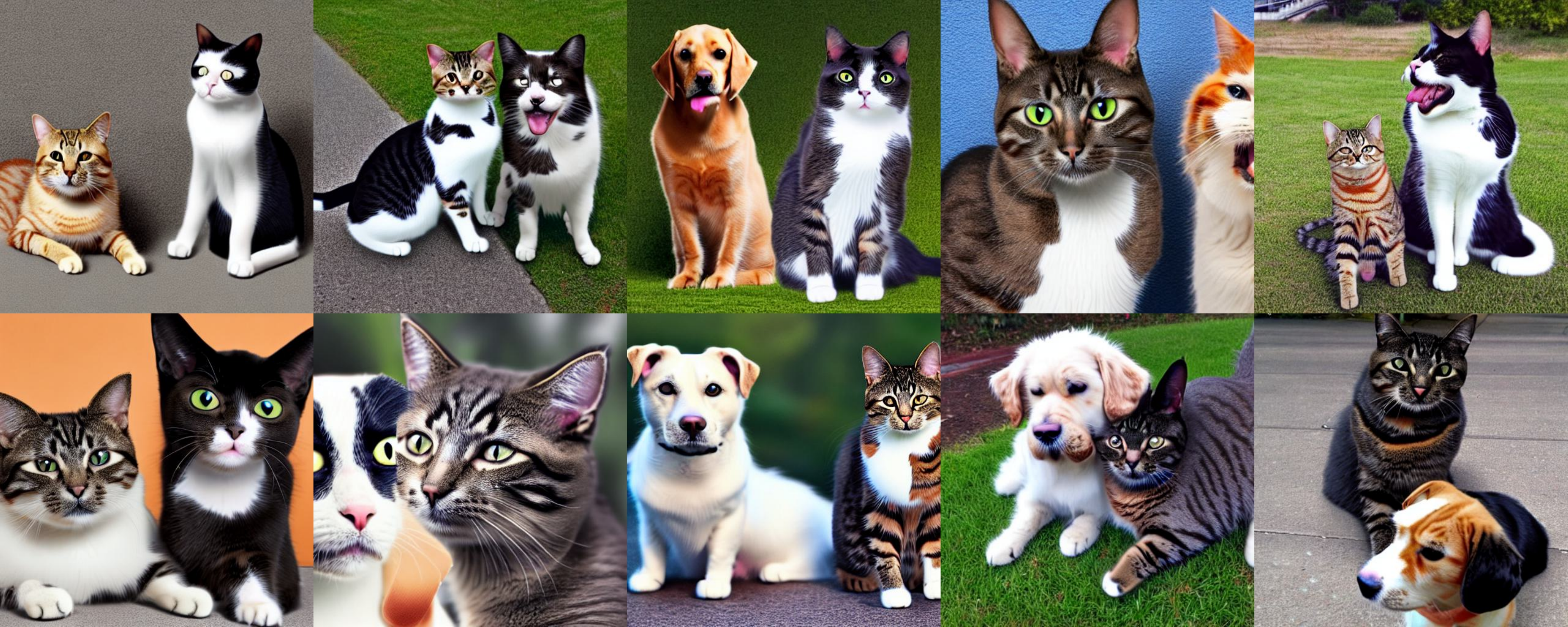}
\label{fig:app_comp_1}}
\subfigure[RL model]
{\includegraphics[width=0.99\textwidth]{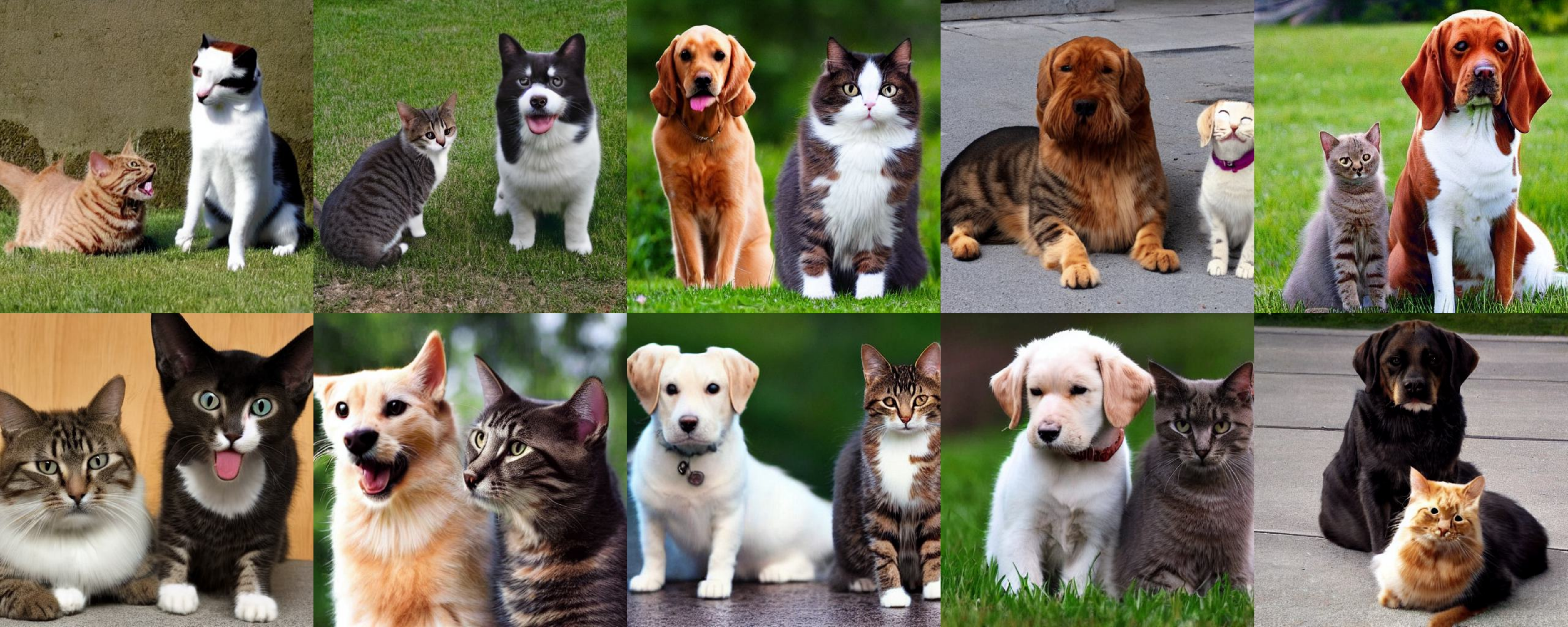}
\label{fig:app_comp_2}}
\caption{Randomly generated samples from (a) the original Stable Diffusion model, (b) supervised fine-tuned (SFT) model and (c) RL fine-tuned model.}
\label{fig:app_comp}
\end{figure*}

\begin{figure*} [h!] \centering
\subfigure[Original model]
{\includegraphics[width=0.99\textwidth]{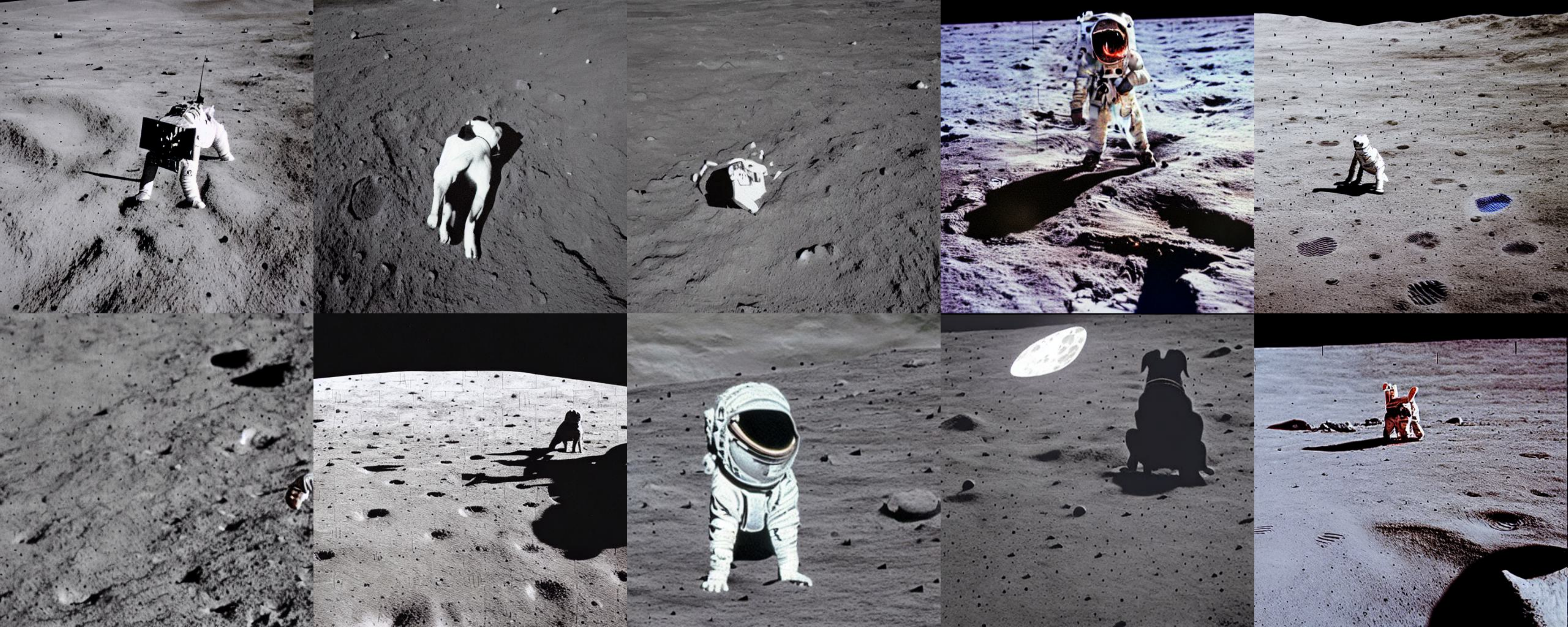}
\label{fig:app_loc_0}} 
\subfigure[SFT model]
{\includegraphics[width=0.99\textwidth]{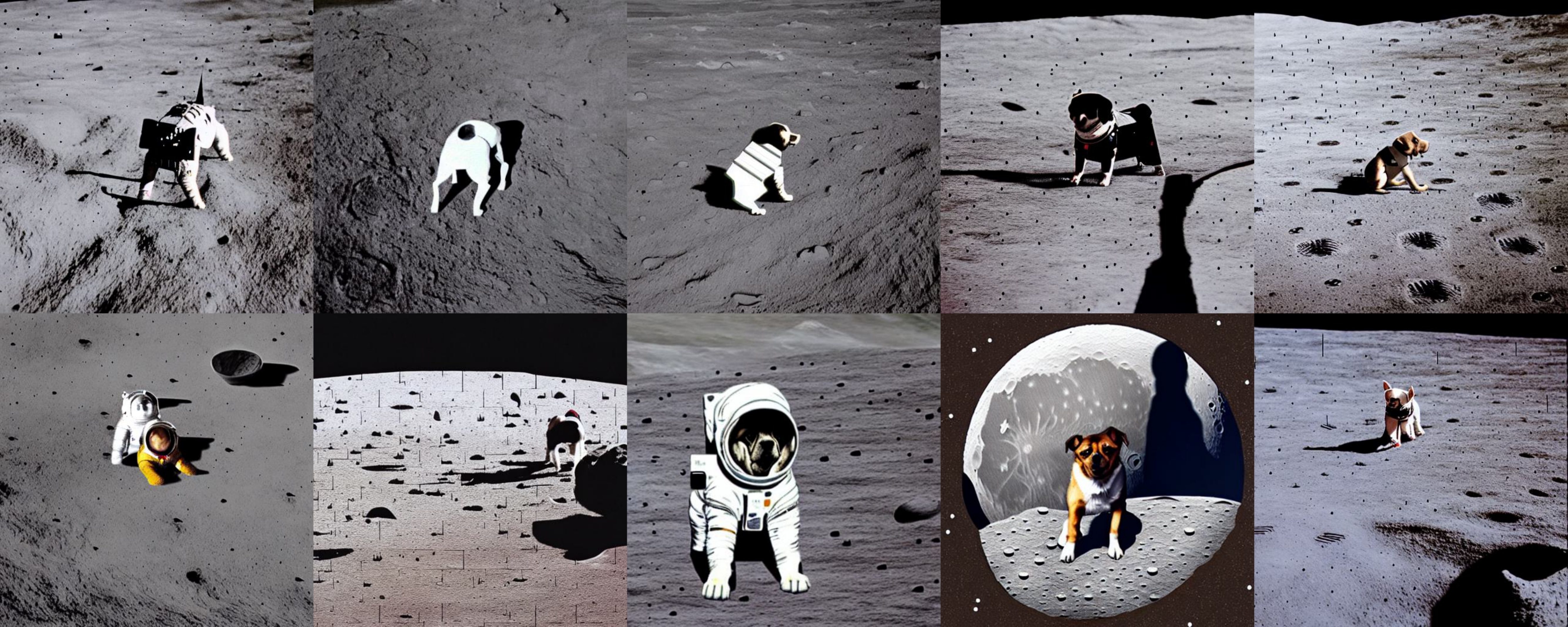}
\label{fig:app_loc_1}}
\subfigure[RL model]
{\includegraphics[width=0.99\textwidth]{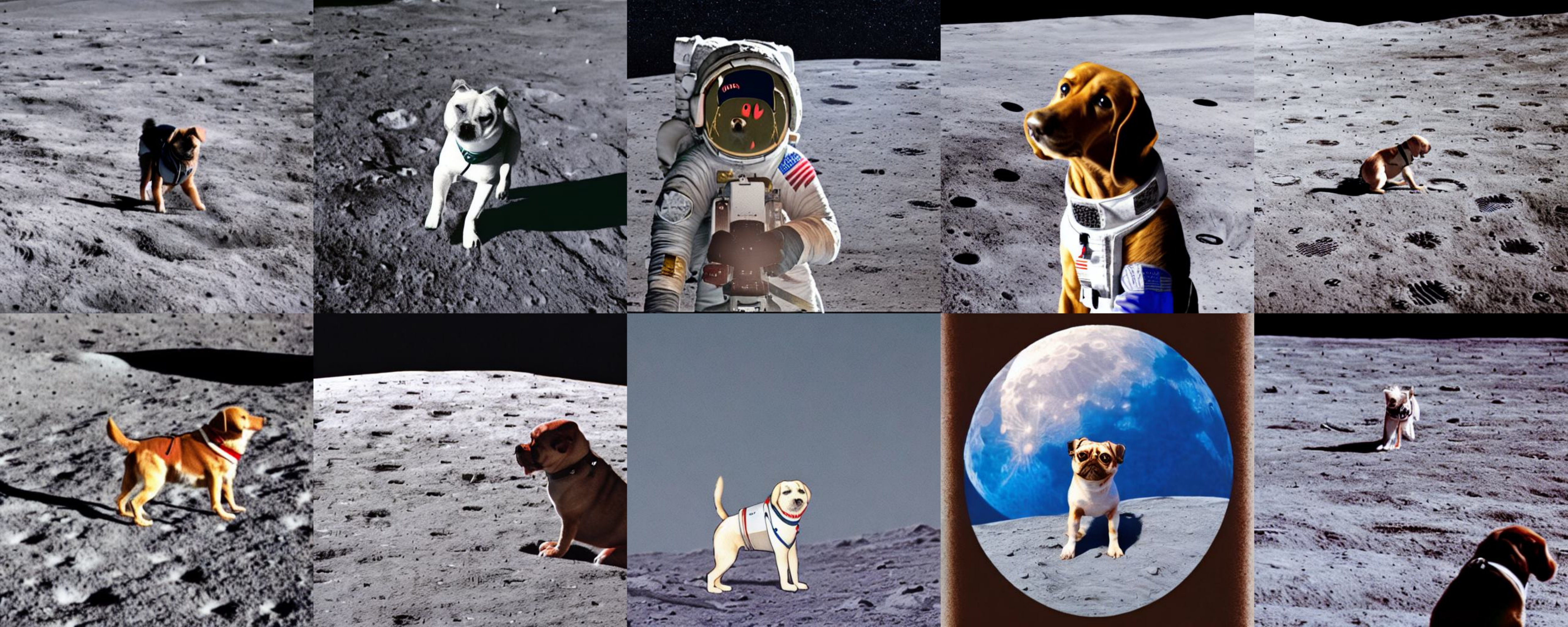}
\label{fig:app_loc_2}}
\caption{Randomly generated samples from (a) the original Stable Diffusion model, (b) supervised fine-tuned (SFT) model and (c) RL fine-tuned model.}
\label{fig:app_loc}
\end{figure*}

\begin{figure*} [h!] \centering
\subfigure[RL model without KL regularization]
{\includegraphics[width=0.99\textwidth]{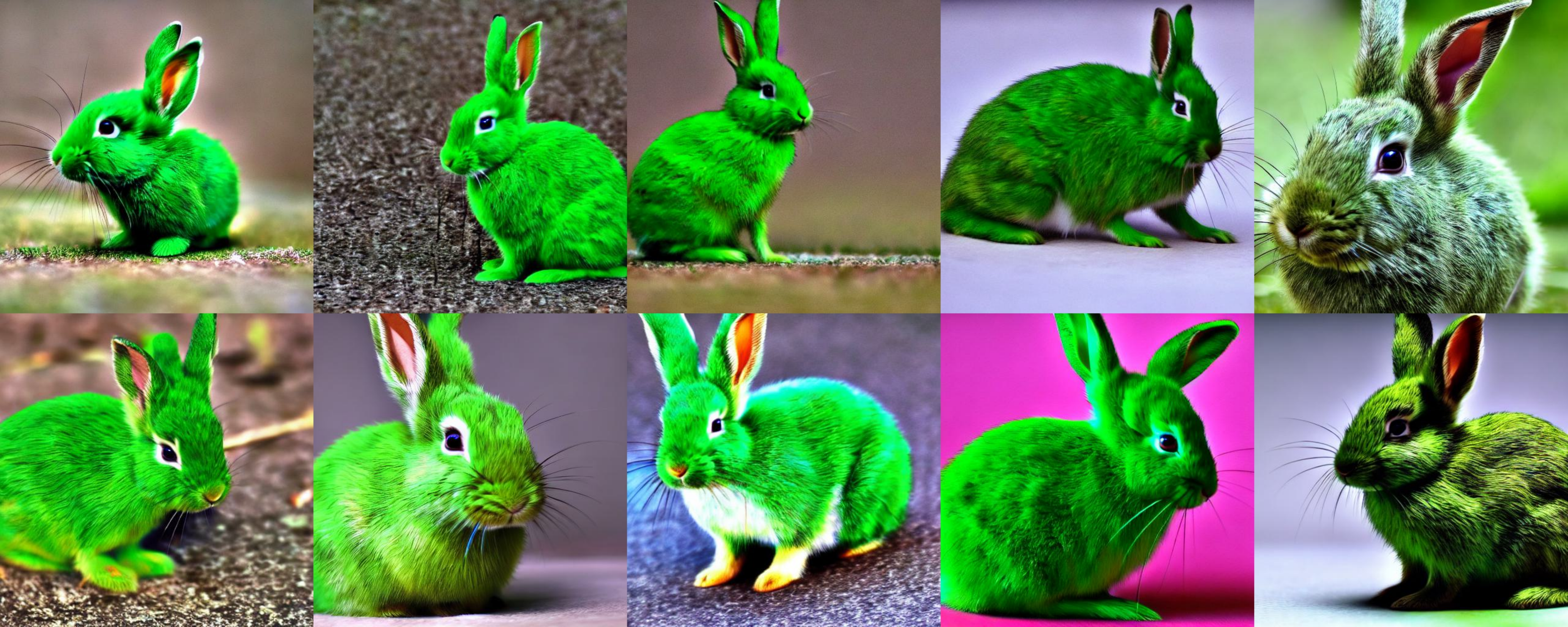}
\label{fig:app_kl_rl_kl}} 
\subfigure[RL model with KL regularization]
{\includegraphics[width=0.99\textwidth]{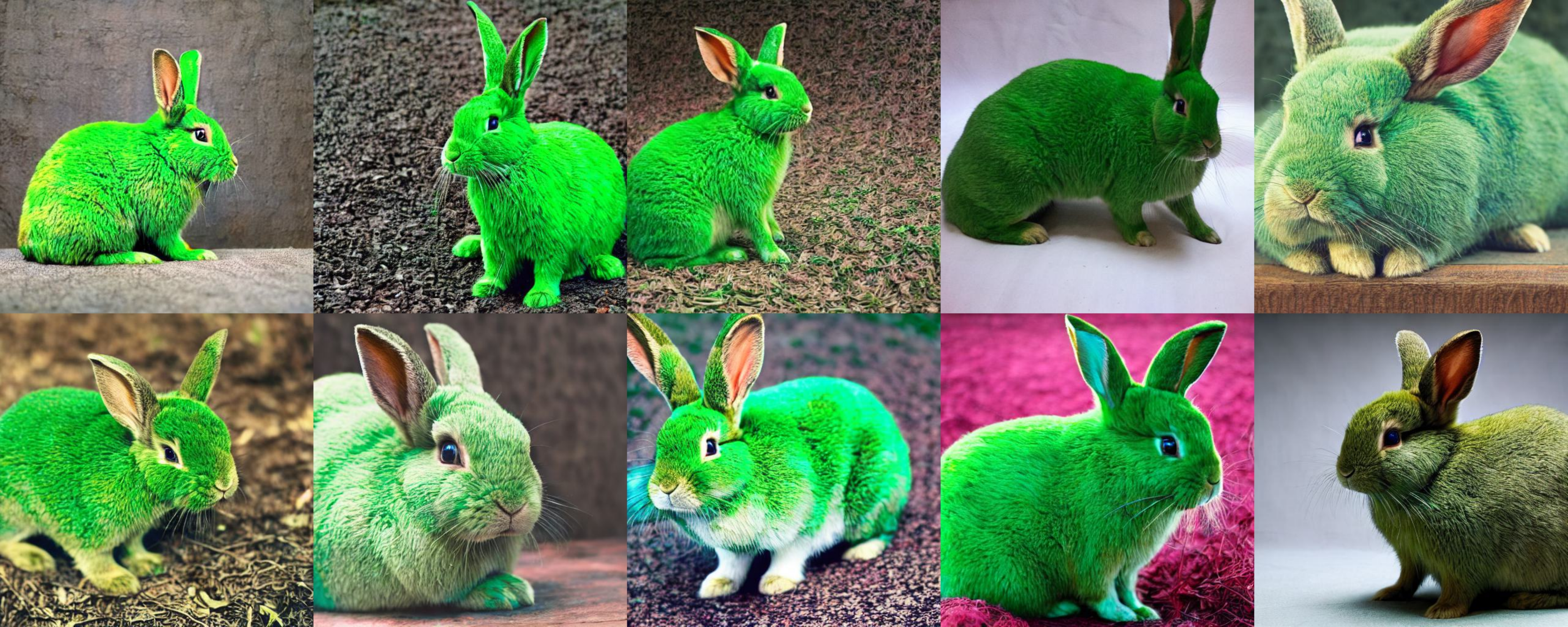}
\label{fig:app_kl_rl_no_kl}}
\caption{Randomly generated samples from RL fine-tuned models (a) without and (b) with KL regularization.}
\label{fig:app_kl_rl}
\end{figure*}

\begin{figure*} [h!] \centering
\subfigure[SFT model without KL regularization]
{\includegraphics[width=0.99\textwidth]{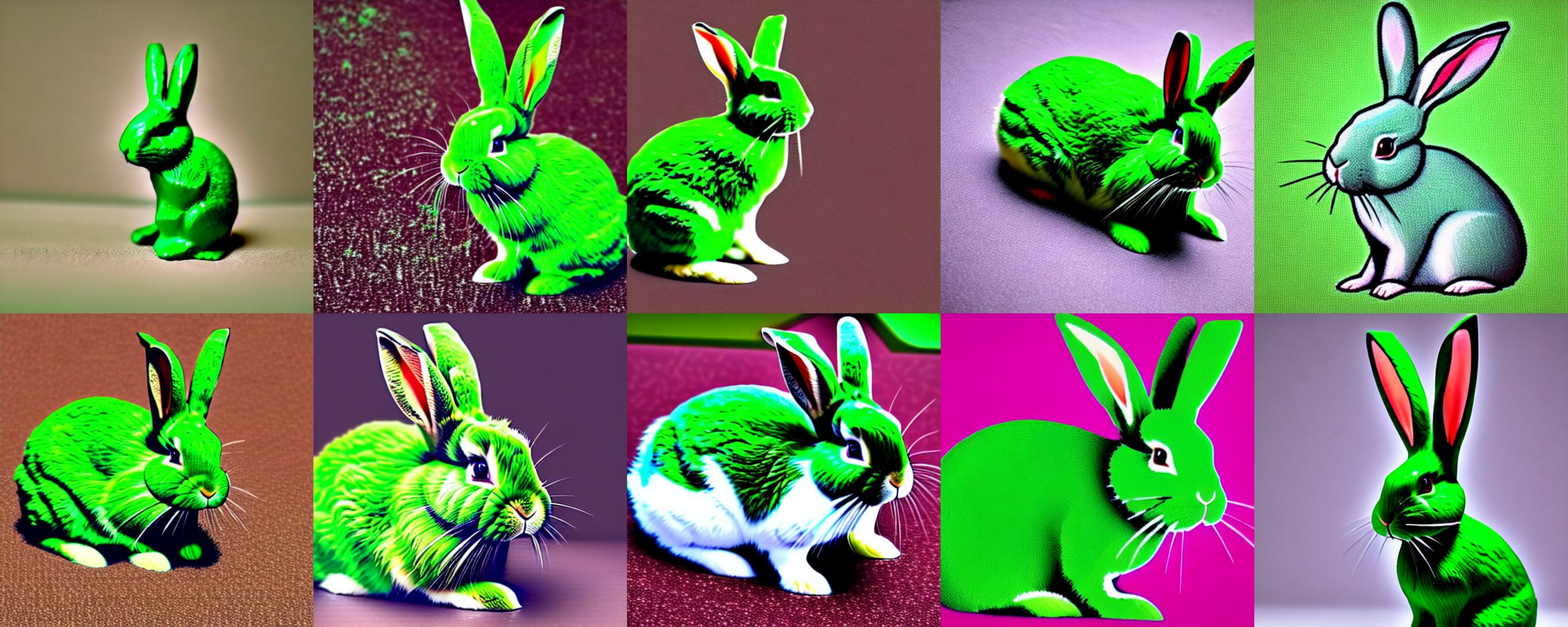}
\label{fig:app_kl_sup_kl}} 
\subfigure[SFT model with KL regularization]
{\includegraphics[width=0.99\textwidth]{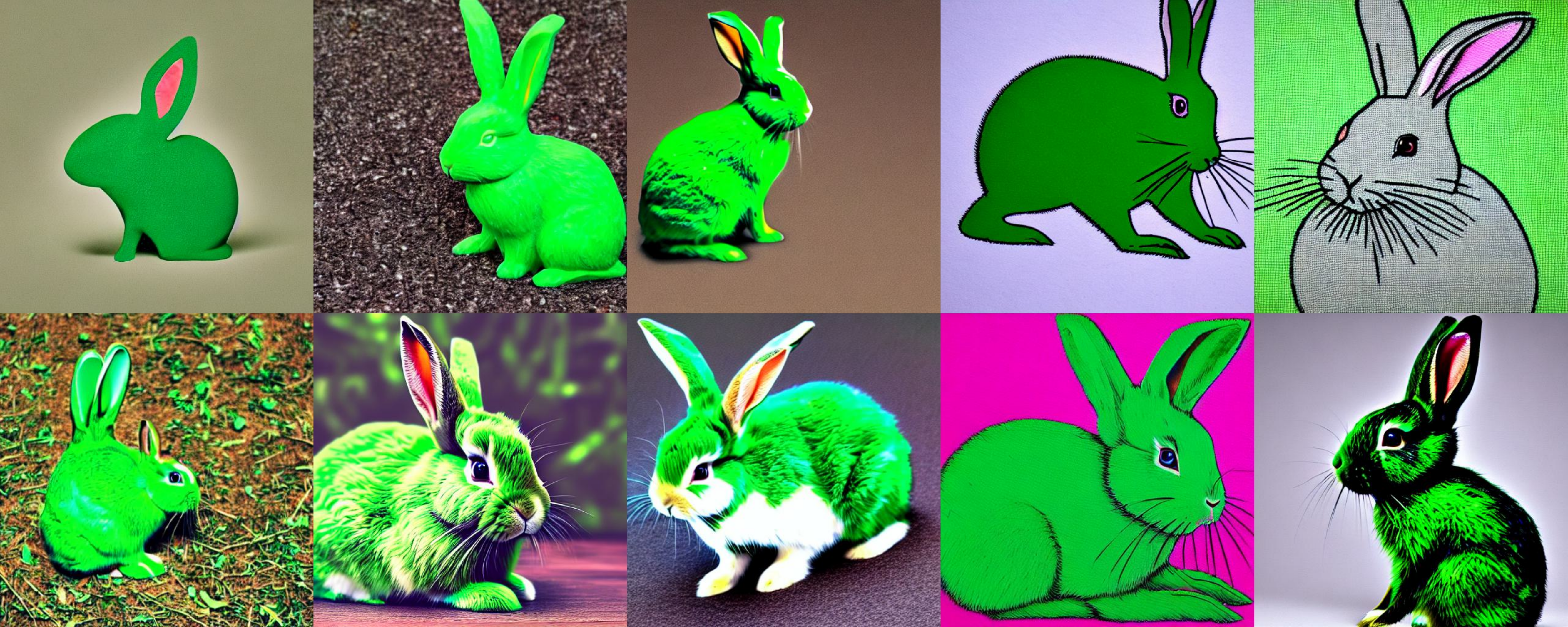}
\label{fig:app_kl_sup_no_kl}}
\caption{Randomly generated samples from supervised fine-tuned (SFT) models (a) without and (b) with KL regularization.}
\label{fig:app_kl_sup}
\end{figure*}

\end{document}